\documentclass{article}

\usepackage[preprint]{neurips_2026}
\setcitestyle{numbers,square,comma}

\usepackage[utf8]{inputenc}
\usepackage[T1]{fontenc}
\usepackage{hyperref}
\hypersetup{colorlinks=true, linkcolor=black, citecolor=black, urlcolor=black}
\usepackage{url}
\usepackage{booktabs}
\usepackage{amsfonts}
\usepackage{amssymb}
\usepackage{amsmath}
\usepackage{mathtools}
\usepackage{amsthm}
\usepackage{bm}
\usepackage{nicefrac}
\usepackage{microtype}
\usepackage[pdftex]{graphicx}
\usepackage{subcaption}
\usepackage[table]{xcolor}
\usepackage{multirow}
\usepackage{array}
\usepackage{adjustbox}
\usepackage{algorithm}
\usepackage{algpseudocode}
\usepackage{float}

\definecolor{oursred}{RGB}{250,235,235}
\definecolor{pgcblue}{RGB}{235,245,255}

\definecolor{darkblue}{rgb}{0.0, 0.0, 0.55}

\usepackage{hyperref}
\hypersetup{
    colorlinks=true,
    linkcolor=darkblue,
    citecolor=darkblue,
    urlcolor=darkblue,
}

\usepackage[most]{tcolorbox}
\usepackage{xcolor}

\definecolor{creditbg}{RGB}{232,244,255}
\definecolor{creditframe}{RGB}{55,120,190}
\definecolor{pathbg}{RGB}{255,241,224}
\definecolor{pathframe}{RGB}{205,120,45}
\definecolor{principlebg}{RGB}{248,248,248}
\definecolor{principleframe}{RGB}{120,120,120}

\newtcbox{\creditbox}{
  nobeforeafter,
  math upper,
  tcbox raise base,
  enhanced,
  colback=creditbg,
  colframe=creditframe,
  boxrule=0.5pt,
  arc=2pt,
  left=2pt,
  right=2pt,
  top=1pt,
  bottom=1pt
}

\newtcbox{\pathbox}{
  nobeforeafter,
  math upper,
  tcbox raise base,
  enhanced,
  colback=pathbg,
  colframe=pathframe,
  boxrule=0.5pt,
  arc=2pt,
  left=2pt,
  right=2pt,
  top=1pt,
  bottom=1pt
}

\newenvironment{tightdisplay}
{
  \begingroup
  \setlength{\abovedisplayskip}{2pt}
  \setlength{\belowdisplayskip}{2pt}
  \setlength{\abovedisplayshortskip}{2pt}
  \setlength{\belowdisplayshortskip}{2pt}
}
{
  \endgroup
}

\theoremstyle{plain}
\newtheorem{theorem}{Theorem}[section]
\newtheorem{proposition}[theorem]{Proposition}
\newtheorem{lemma}[theorem]{Lemma}

\theoremstyle{definition}
\newtheorem{definition}[theorem]{Definition}
\newtheorem{assumption}[theorem]{Assumption}
\theoremstyle{remark}

\newcommand{\R}{\mathbb{R}}
\newcommand{\E}{\mathbb{E}}
\newcommand{\cP}{\mathcal{P}}
\newcommand{\cA}{\mathcal{A}}

\newcommand{\cF}{\mathcal{F}}
\newcommand{\cK}{\mathcal{K}}
\newcommand{\vx}{\bm{x}}
\newcommand{\vz}{\bm{z}}

\newcommand{\vv}{\bm{v}}
\newcommand{\vg}{\bm{g}}

\newcommand{\vphi}{\bm{\phi}}
\newcommand{\vtheta}{\bm{\theta}}
\newcommand{\eps}{\varepsilon}
\newcommand{\dd}{\,\mathrm{d}}

\title{From Baselines to Transport Geodesics: Axiomatic Attribution via Optimal Generative Flows}

\author{%
  Cenwei Zhang\thanks{Equal contribution.} \\
  Shanghai Jiao Tong University\\
  Shanghai, China\\
  \texttt{cwzhang2001@gmail.com}
  \And
  Lin Zhu\footnotemark[1] \\
  Aalto University\\
  Espoo, Finland\\
  \texttt{lin.1.zhu@aalto.fi}
  \And
  Manxi Lin \\
  Alibaba\\
  Hangzhou, China\\
  \texttt{linmanxi.lmx@alibaba-inc.com}
  \And
  Lei You\thanks{Corresponding author.} \\
  Technical University of Denmark\\
  Copenhagen, Denmark\\
  \texttt{leiyo@dtu.dk} \\
}

\begin{document}

\maketitle

\begin{abstract}
Feature attributions often hide a critical modeling choice: they explain
a prediction along a counterfactual path from a reference state to an input.
Different baselines, interpolations, and generative trajectories define
different paths and can therefor produce different explanations. We study this
path ambiguity as a modeling problem. Our central question is whether the path can be chosen by the data-generating transport process, rather
than by a hand-designed interpolation or by the sensitivity geometry of the
model being explained. We separate attribution into fixed-path credit allocation
and path selection. For a fixed path, we prove that the Aumann-Shapley line
integral is the unique attribution rule under standard fixed-path axioms and
explicit coordinate-trace regularity. For path selection, we minimize kinetic
action over flows that transport a reference distribution to the data
distribution, yielding a transport-geodesic attribution principle. We
approximate this ideal with Rectified Flow and Reflow and derive stability
bounds linking vector-field error to attribution error. Experiments show that
lower-action, transport-consistent paths produce more stable and structured
explanations, preserving competitive deletion faithfulness, without
claiming data-manifold membership. Our code is available at \url{https://github.com/cenweizhang/OTFlowSHAP}.
\end{abstract}

\section{Introduction}
\label{sec:introduction}

Feature attribution asks a simple question: which input coordinates made a model score go up or down for a particular example? This question is simple only at the surface. To explain an image classifier, one usually compares the observed image with a reference state where some information is absent. A black image, a blurred image, a mean image, a masked image, and an inpainted image all define different meanings of absence. They also define different transitions from absence to presence. Since a neural network is nonlinear, the contribution assigned to a pixel or region can change when this transition changes. We therefore view the main ambiguity in attribution as a \emph{path ambiguity}. The problem is not only which attribution formula one should use, but also which counterfactual path one should trust.

Classical Shapley values give a principled answer for finite cooperative games \citep{shapley1953value}. They allocate the total payoff according to efficiency, symmetry, dummy, and additivity. Model explanation methods such as SHAP and KernelSHAP adapt this idea to predictors by treating features as players \citep{strumbelj2010efficient,lundberg2017unified}. This adaptation is powerful, but it also exposes the absence problem. To evaluate a coalition, one must define the model value when other features are missing. Different choices give different Shapley games and different explanations \citep{sundararajan2020many}. In images, many such counterfactuals look unnatural or low density, so the model may be queried in regions that it never learned to handle.

Path-integral methods reduce the combinatorial burden. Integrated Gradients connects a baseline to the input by a straight line and integrates the gradient along that path \citep{sundararajan2017axiomatic}. Expected Gradients averages this idea over references \citep{erion2021improving}. Guided or blurred paths modify the interpolation to reduce visual noise \citep{kapishnikov2021guided}. Recent geometric variants go further by replacing the straight line with Riemannian geodesics on a learned data geometry or on a model-induced geometry \citep{zaher2024manifold,salek2025using}. These works make clear that path choice matters. We do not try to show that a transport path is universally stronger than these geometric paths. Explanation methods are difficult to rank by a single number because deletion, insertion, stability, localization, and visual structure measure different properties. Our aim is different, namely, we ask:
\vspace{-2.5mm}
\begin{quote}
\emph{Whether the attribution path can be defined by the
data-generating process rather than by the sensitivity geometry of the model
being explained?}
\end{quote}
\vspace{-2.5mm}
In other words, we study a different path-selection object:
not a hand-designed instance-level curve and not a model-induced Riemannian
geodesic, but a least-action generative transport process from a reference
distribution to the data distribution.

This viewpoint changes the role of the generative model. A diffusion or flow model can certainly provide a structured trajectory, but a structured trajectory is not automatically a principled explanation path. We use optimal transport to specify the target path-selection principle. Among all flows that move a reference distribution \(p_0\) to the data distribution \(p_1\), we choose the one with minimum kinetic action. By the dynamic formulation of optimal transport, this is the Wasserstein-2 geodesic in distribution space \citep{benamou2000computational,villani2008optimal}. A sample explanation then follows a characteristic curve of this least-action flow and integrates the predictor gradient along that curve.

We separate the construction into two parts. The first part is allocation along a fixed path. Once a smooth path \(\gamma\) from a reference state to the input is given, we ask which rule can split the score change \(f(\gamma(1))-f(\gamma(0))\) among coordinates. We show that natural fixed-path axioms force the Aumann-Shapley line integral. This result says that the allocation rule is not the place where one should add more heuristics. The second part is path selection. We choose the path by the least-action transport principle above. Thus, the paper's main object is not a new saliency heuristic, but a decomposition of the attribution problem into fixed-path credit allocation and distribution-level transport path selection.

This story differs from an ``on-manifold Shapley'' story. We do not claim that every intermediate point of every sample trajectory lies on a true low-dimensional image manifold. Such a claim would require a formal data manifold, a certificate of membership, and a proof that the learned sampler respects it. Our construction does not need this claim. We only require a reference distribution, a data distribution, and a transport flow between them. The intermediate marginals are generated by this flow. For this reason, we use terms such as \emph{transport-consistent}, \emph{generative}, and \emph{transport-geodesic} rather than strict on-manifold.

The practical method follows the same principle. The ideal Wasserstein geodesic is not available in high-dimensional image spaces. We approximate it with Rectified Flow and Reflow, which learn vector fields that move samples from a simple prior toward the data distribution with increasingly straight transport trajectories \citep{liu2022flow,lipman2022flow}. Given an input \(\vx\), we trace a learned flow trajectory between its reference endpoint and \(\vx\), evaluate the target logit gradient along the trajectory, and accumulate coordinatewise products between gradients and path increments. This gives a computable approximation of the transport-geodesic Aumann-Shapley attribution.

\paragraph{We make four contributions.} First, we make explicit a separation that is implicit in path-based attribution methods: fixed-path credit allocation is a different problem from path selection. Second, we prove a fixed-path uniqueness theorem showing that the Aumann-Shapley line integral is the unique attribution rule under explicit path axioms and coordinate-trace regularity. Third, we explore a distribution-level path-selection principle based on kinetic-action minimization, which turns baseline choice into an optimal transport problem. Fourth, we instantiate the ideal principle with Rectified Flow and Reflow, and we empirically study whether lower-action, transport-consistent paths improve stability and visual structure while retaining competitive faithfulness.

Due to the page limit, we present a detailed related work section in the appendix. 

\section{Problem setup}
\label{sec:problem_setup}

Let \(f_c:\R^d\to\R\) be the scalar score of a fixed predictor for class or target \(c\). We write inputs as bold vectors \(\vx\in\R^d\). We write \(\partial_i f(\vx)\) for the partial derivative of a scalar function with respect to coordinate \(x_i\), and we write \(\nabla_{\vx} f(\vx)\) for the full input gradient. When the input variable is clear, we also write \(\nabla f(\vx)\). Let \(p_1\in\cP_2(\R^d)\) denote the data distribution and let \(p_0\in\cP_2(\R^d)\) denote a reference distribution such as a standard Gaussian or another simple prior. Here \(\cP_2(\R^d)\) is the set of probability measures with finite second moment. For a specific observed input \(\vx_1\), an explanation should account for the score change between a reference endpoint \(\vx_0\) and \(\vx_1\). The endpoint \(\vx_0\) can be produced by a backward generative flow or by a coupling between \(p_0\) and \(p_1\).

We describe a counterfactual transition by a continuously differentiable path
\[
\gamma:[0,1]\to\R^d,\qquad \gamma(0)=\vx_0,\quad \gamma(1)=\vx_1.
\]
The parameter \(t\in[0,1]\) is the path time. For a \(C^1\) path, we write
$\dot{\gamma}(t)=\frac{\mathrm{d}\gamma(t)}{\mathrm{d}t}$
for its time derivative, and \(\dot{\gamma}_i(t)=\mathrm{d}\gamma_i(t)/\mathrm{d}t\) for the derivative of its \(i\)-th coordinate. A \(C^1\) path is called regular when \(\dot{\gamma}(t)\ne\bm{0}\) for all \(t\in[0,1]\). This dot notation is only shorthand; in the main attribution formula we write the derivative explicitly. The path may be a straight line, a diffusion trajectory, a flow trajectory, or an optimal transport characteristic.

A path-based attribution rule returns a vector \(\vphi(f_c,\gamma)\in\R^d\) whose \(i\)-th coordinate represents the contribution of input coordinate \(i\) to the score change along \(\gamma\). Efficiency asks that these contributions sum to the finite difference:
\begin{equation}
\sum_{i=1}^d \phi_i(f_c,\gamma)=f_c(\gamma(1))-f_c(\gamma(0)).
\label{eq:efficiency}
\end{equation}
Equation~\eqref{eq:efficiency} is the continuous analogue of Shapley efficiency. It is necessary but not sufficient. It says that the accounting balances, but it does not say how the accounting should be done or which path should be used.

The central design principle of our method is not to design a new saliency
heuristic, but to separate two decisions that are often entangled. For an input
\(\vx\), write \(\gamma_{\vx}\) for a counterfactual path ending at \(\vx\),
write \(\Phi(f,\gamma)\) for the Aumann-Shapley attribution vector along a fixed
path, and write \(\operatorname{Char}_{\vx}(\vv)\) for the characteristic curve of
a velocity field \(\vv\) that ends at \(\vx\). Then the organizing equation is
\begin{tightdisplay}
\[
\operatorname{Attr}(f,\vx)
=
\underbrace{
\creditbox{\Phi(f,\gamma_{\vx})}
}_{\scriptsize
\begin{array}{c}
\text{Aumann-Shapley}\\
\text{credit allocation}
\end{array}}
\quad
\text{with}
\quad
\underbrace{
\pathbox{
\begin{gathered}
(\rho^\star,\vv^\star)
\in
\arg\min_{(\rho,\vv):p_0\to p_1}\cA(\rho,\vv),\\[-0.5mm]
\gamma_{\vx}=\operatorname{Char}_{\vx}(\vv^\star)
\end{gathered}}
}_{\scriptsize
\begin{array}{c}
\text{transport-geodesic}\\
\text{path selection}
\end{array}} .
\]
\end{tightdisplay}%
Here \(\cA\) denotes kinetic action, which we define formally in Eq.~\eqref{eq:kinetic_action}. The blue term asks how to allocate score change once a path is fixed. This is the part addressed by Aumann-Shapley path integrals. The orange term asks which counterfactual path should be used. This is the part that is often fixed by a baseline or a hand-designed interpolation rule; we instead pose it as an optimal generative transport problem. 

A time-dependent velocity field \(\vv_t:\R^d\to\R^d\) induces trajectories through the ordinary differential equation
\begin{equation}
\frac{\mathrm{d}\vx_t}{\mathrm{d}t}=\vv_t(\vx_t),\qquad t\in[0,1].
\label{eq:ode}
\end{equation}
For any time-dependent state \(\vx_t\), the shorthand \(\dot{\vx}_t\) means \(\mathrm{d}\vx_t/\mathrm{d}t\). If \(\vx_0\sim p_0\) and the solution of Eq.~\eqref{eq:ode} has marginal law \(\rho_t\), then mass conservation is described by the continuity equation
\begin{equation}
\partial_t\rho_t+\nabla\cdot(\rho_t\vv_t)=0,
\qquad \rho_0=p_0,
\qquad \rho_1=p_1.
\label{eq:continuity}
\end{equation}
Equation~\eqref{eq:continuity} is a distribution-level constraint. It does not say that \(\rho_t=p_1\) for intermediate times. The intermediate laws \(\rho_t\) form a generative bridge between the reference and data distributions.

We will use two types of objects. A fixed path \(\gamma\) is the object used by the attribution integral. A flow pair \((\rho,\vv)\), where \(\rho=(\rho_t)_{t\in[0,1]}\) and \(\vv=(\vv_t)_{t\in[0,1]}\), is the object used to select paths. In experiments, we parameterize the learned velocity field by a neural network \(\vv_{\vtheta}(\vx,t)\). When we compare theory and implementation, we write \(\hat{\vv}_t(\vx)=\vv_{\vtheta}(\vx,t)\) for the learned field. Its characteristic curves supply the numerical paths for attribution.

\section{Axiomatic attribution along a fixed path}
\label{sec:axiomatic_path}

We first solve the allocation problem while treating the path as fixed. This part of the theory is deliberately independent of optimal transport. It says that, after the user or a generative model has chosen a smooth path, the natural attribution rule is forced by axioms.

\begin{definition}[Admissible score class and path attribution rule]
\label{def:path_rule}
Let \(\cF\) be an admissible class of scalar scores. In this paper, admissible means that every \(f\in\cF\) is \(C^1\) on an open set containing the image of the path under consideration, and that \(\cF\) is closed under finite linear combinations. For a \(C^1\) path \(\gamma:[0,1]\to\R^d\), a path attribution rule assigns a number \(A_i(f,\gamma)\in\R\) to each score \(f\in\cF\) and each coordinate \(i\in\{1,\ldots,d\}\). The vector \(\bm{A}(f,\gamma)=(A_1(f,\gamma),\ldots,A_d(f,\gamma))\) explains the score difference \(f(\gamma(1))-f(\gamma(0))\) along \(\gamma\).
\end{definition}

For a score \(f\in\cF\), a path \(\gamma\), and a coordinate \(i\), define the coordinate trace
\[
h_i^f(t)=\frac{\partial f(\gamma(t))}{\partial x_i}\frac{\mathrm{d}\gamma_i(t)}{\mathrm{d}t},\qquad t\in[0,1].
\]
This trace is the infinitesimal score change attributed to coordinate \(i\) at path time \(t\). The following assumption collects the fixed-path axioms. We state the axioms as an assumption because the representation theorem needs to know exactly what information the rule is allowed to use.

\begin{assumption}[Fixed-path attribution axioms]
\label{ass:fixed_path_axioms}
For every fixed regular \(C^1\) path \(\gamma\), the rule \(A_i(f,\gamma)\) satisfies the following conditions for all admissible scores. It satisfies efficiency, namely Eq.~\eqref{eq:efficiency}. It is linear in the model score: for real numbers \(a,b\) and scores \(f,g\in\cF\),
\[
A_i(af+bg,\gamma)=aA_i(f,\gamma)+bA_i(g,\gamma).
\]
It satisfies the dummy property: if \(h_i^f(t)=0\) for all \(t\in[0,1]\), then \(A_i(f,\gamma)=0\). It is invariant to smooth increasing reparameterizations of the path: for any \(C^1\) bijection \(\sigma:[0,1]\to[0,1]\) with \(\sigma(0)=0\), \(\sigma(1)=1\), and \(\mathrm{d}\sigma(t)/\mathrm{d}t>0\), we have \(A_i(f,\gamma\circ\sigma)=A_i(f,\gamma)\). Finally, it is coordinate-trace determined and continuous: for each \(i\), the value \(A_i(f,\gamma)\) depends on \(f\) along \(\gamma\) only through the scalar trace \(h_i^f\), and this dependence is continuous under uniform convergence of that trace.
\end{assumption}

The coordinate-trace condition is not a technical trick. It encodes what a coordinate attribution along a path means. If the attribution assigned to coordinate \(i\) changed when all coordinate-\(i\) infinitesimal contributions along the path stayed the same, then the rule would be using information outside the coordinate trace to assign coordinate-\(i\) credit. That behavior would no longer be a coordinatewise path attribution.

\begin{assumption}[Coordinate-trace richness]
\label{ass:trace_richness}
For the fixed path \(\gamma\), the admissible score class \(\cF\) is rich enough to separate coordinate traces. Concretely, for any coordinate \(i\) and any scalar trace that can be written as \(h(t)=a(t)\,\mathrm{d}\gamma_i(t)/\mathrm{d}t\) with continuous \(a\), there is an admissible score \(g\in\cF\) such that
\[
\frac{\partial g(\gamma(t))}{\partial x_i}\frac{\mathrm{d}\gamma_i(t)}{\mathrm{d}t}=h(t)
\quad\text{and}\quad
\frac{\partial g(\gamma(t))}{\partial x_j}\frac{\mathrm{d}\gamma_j(t)}{\mathrm{d}t}=0\quad\text{for all }j\ne i.
\]
\end{assumption}

Assumption~\ref{ass:trace_richness} makes explicit a standard richness requirement behind representation theorems. It rules out degenerate score classes where a coordinate trace cannot be varied without also changing all other coordinate traces. For smooth embedded paths, this condition can be obtained by locally extending prescribed first-order information along the curve.

\begin{definition}[Aumann-Shapley path attribution]
\label{def:as_path}
Let \(f\in\cF\) be differentiable in a neighborhood of the image of \(\gamma\). The Aumann-Shapley attribution of coordinate \(i\) along \(\gamma\) is
\begin{equation}
\Phi_i(f,\gamma)=
\int_0^1
\frac{\partial f(\gamma(t))}{\partial x_i}
\frac{\mathrm{d}\gamma_i(t)}{\mathrm{d}t}
\dd t.
\label{eq:as_path}
\end{equation}
\end{definition}

The definition is the coordinate decomposition of the line integral of \(\nabla f\) along \(\gamma\). Summing Eq.~\eqref{eq:as_path} over coordinates gives
\[
\sum_{i=1}^d\Phi_i(f,\gamma)=
\int_0^1 \nabla f(\gamma(t))^\top\frac{\mathrm{d}\gamma(t)}{\mathrm{d}t}\dd t
=f(\gamma(1))-f(\gamma(0)),
\]
so efficiency follows from the chain rule. Reparameterization invariance follows because a change of the time variable changes the path derivative and the integration variable in compensating ways.

\begin{theorem}[Fixed-path uniqueness]
\label{thm:fixed_path_uniqueness}
Fix a regular \(C^1\) path \(\gamma\). Any path attribution rule satisfying Assumptions~\ref{ass:fixed_path_axioms} and~\ref{ass:trace_richness} coincides with the Aumann-Shapley path attribution in Eq.~\eqref{eq:as_path}. That is, for every \(f\in\cF\) and every coordinate \(i\),
\[
A_i(f,\gamma)=\Phi_i(f,\gamma).
\]
\end{theorem}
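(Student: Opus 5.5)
The plan is to exploit the coordinate-trace condition in Assumption~\ref{ass:fixed_path_axioms}, so that the value of \(A_i\) on an arbitrary score is transferred to a specially constructed score from Assumption~\ref{ass:trace_richness}. For that special score, dummy and efficiency together pin the value down. The argument is essentially pointwise in \(i\) and uses only three ingredients: trace-determinacy, dummy and efficiency. I do not expect to need linearity, continuity or reparameterization invariance, though I will note where linearity could serve as a cross-check.

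\emph{Step 1 (reduction to a functional of the trace).} Fix the regular path \(\gamma\) and a coordinate \(i\). Read the coordinate-trace-determined axiom as saying that there is a map \(F_i\) with \(A_i(f,\gamma)=F_i(h_i^f)\) for every \(f\in\cF\). Equivalently, if two admissible scores \(f,g\) satisfy \(h_i^f\equiv h_i^g\) on \([0,1]\), then \(A_i(f,\gamma)=A_i(g,\gamma)\). Since \(f\in\cF\) is \(C^1\) near the image of \(\gamma\), the trace \(h_i^f(t)=a(t)\,\mathrm{d}\gamma_i(t)/\mathrm{d}t\) has the continuous factor \(a(t)=\partial f(\gamma(t))/\partial x_i\). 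So every trace arising from \(\cF\) lies in the class covered by Assumption~\ref{ass:trace_richness}.

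\emph{Step 2 (evaluate on an isolated-trace score).} Given \(f\), let \(h=h_i^f\). Assumption~\ref{ass:trace_richness} provides \(g\in\cF\) with \(h_i^g=h\) and \(h_j^g\equiv 0\) for all \(j\ne i\). The dummy property then gives \(A_j(g,\gamma)=0\) for every \(j\ne i\). Efficiency, Eq.~\eqref{eq:efficiency}, together with the chain rule, gives
\[
A_i(g,\gamma)=g(\gamma(1))-g(\gamma(0))=\int_0^1 \sum_{j=1}^d h_j^g(t)\dd t=\int_0^1 h(t)\dd t .
\]
The chain-rule step is legitimate because \(g\) is \(C^1\) on a neighborhood of the image of \(\gamma\) and \(\gamma\) is \(C^1\).

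\emph{Step 3 (transfer back).} Since \(h_i^g=h_i^f\), Step~1 yields
\[
A_i(f,\gamma)=A_i(g,\gamma)=\int_0^1 h_i^f(t)\dd t=\Phi_i(f,\gamma),
\]
which is Eq.~\eqref{eq:as_path}. As an optional consistency check, linearity shows that \(F_i\) is additive on realizable traces, which is consistent with \(F_i(h)=\int_0^1 h\). Continuity would extend the identity to uniform limits of traces, but this is not required here, since every trace of an admissible score is already realizable.

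The main obstacle is Step~1. The argument hinges on interpreting ``depends on \(f\) only through the scalar trace \(h_i^f\)'' as a statement across different scores, so that \(F_i\) is well defined on traces. Under that reading the proof is immediate. If one only had dependence within a restricted family, one would need to verify that \(f\) and the auxiliary \(g\) fall in the same family. A fallback would be to use linearity on \(f-g\): its \(i\)-th trace vanishes, so the dummy property gives \(A_i(f-g,\gamma)=0\) and hence \(A_i(f,\gamma)=A_i(g,\gamma)\). This fallback avoids the trace-functional interpretation altogether, and I would present it as the cleanest route.
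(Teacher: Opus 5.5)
Your proof is correct and follows the paper's argument. Your Step~1 is the paper's Lemma~\ref{lem:trace_representation}, defining a functional on coordinate traces. Your Steps~2--3 match the paper's proof of the theorem: richness isolates the trace, dummy plus efficiency plus the chain rule give \(\int_0^1 h(t)\dd t\), and the result transfers back to \(f\). Your linearity-on-\(f-g\) fallback is a valid minor variant: it shows the coordinate-trace-determinacy axiom already follows from linearity and dummy, so either route closes the argument.
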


Theorem~\ref{thm:fixed_path_uniqueness} removes one degree of freedom from the explanation problem. Once a path is fixed, the credit-allocation rule is not an additional design choice. The remaining question is therefore not how to modify the attribution formula, but which path should be used.

This result also clarifies the relation to Integrated Gradients. Integrated Gradients is Eq.~\eqref{eq:as_path} for the straight path \(\gamma(t)=\vx_0+t(\vx_1-\vx_0)\). Our method keeps the same fixed-path allocation principle, but replaces the straight line by a path selected through optimal generative transport.

\section{Transport-geodesic path selection by optimal generative flows}
\label{sec:geodesic_flows}

We now address the path-selection problem. If the path determines the explanation, then the path should not be an arbitrary drawing in pixel space. We explore an optimal transport principle for this choice. Among all flows that move \(p_0\) to \(p_1\), the ideal object is the flow with minimum kinetic action. This is a path-selection principle, not a claim that it is the best attribution method under every downstream metric.

\begin{assumption}[Transport regularity]
\label{ass:transport_regular}
The reference and data distributions \(p_0,p_1\in\cP_2(\R^d)\) are absolutely continuous with respect to Lebesgue measure on their supports, and the quadratic-cost optimal transport problem between them admits a unique optimal dynamic plan. The corresponding velocity field \(\vv_t^\star\) is locally Lipschitz on the compact region that contains the trajectories used for attribution.
\end{assumption}

Assumption~\ref{ass:transport_regular} is a standard regularity condition for the ideal theory. It is stronger than what we can certify for high-dimensional learned image models. We use it to define the target object. The learned Rectified Flow is an approximation to this target, not a proof that exact optimal transport has been recovered.

We write \(W_2(p_0,p_1)\) for the quadratic Wasserstein distance between \(p_0\) and \(p_1\). For any admissible flow pair \((\rho,\vv)\), with \(\rho=(\rho_t)_{t\in[0,1]}\) and \(\vv=(\vv_t)_{t\in[0,1]}\), satisfying Eq.~\eqref{eq:continuity}, define its kinetic action by
\begin{equation}
\cA(\rho,\vv)=\int_0^1\int_{\R^d}\|\vv_t(\vx)\|_2^2\dd\rho_t(\vx)\dd t.
\label{eq:kinetic_action}
\end{equation}
When \(\rho_t\) admits a density, this measure integral is the same as \(\int \|\vv_t(\vx)\|_2^2\rho_t(\vx)\dd\vx\). The Benamou-Brenier formula states that
\begin{equation}
W_2^2(p_0,p_1)=\inf_{(\rho,\vv)\text{ satisfying Eq.~\eqref{eq:continuity}}}\cA(\rho,\vv).
\label{eq:benamou_brenier}
\end{equation}
The minimizer \((\rho_t^\star,\vv_t^\star)\) is the Wasserstein-2 geodesic in distribution space \citep{benamou2000computational,villani2008optimal}. This is the sense in which we use the word geodesic. The path of distributions is shortest in the Wasserstein geometry. A single sample trajectory is a characteristic curve of this distributional flow, not a Riemannian geodesic in a prescribed input-space metric.

\begin{definition}[Transport-geodesic characteristic path]
\label{def:geodesic_path}
Under Assumption~\ref{ass:transport_regular}, let \((\rho^\star,\vv^\star)\) minimize Eq.~\eqref{eq:kinetic_action}. For an observed endpoint \(\vx_1\) in the support of \(p_1\), the transport-geodesic characteristic path \(\gamma_{\vx_1}^\star\) is the solution of
\begin{equation}
\frac{\mathrm{d}\gamma_{\vx_1}^\star(t)}{\mathrm{d}t}=\vv_t^\star(\gamma_{\vx_1}^\star(t)),\qquad \gamma_{\vx_1}^\star(1)=\vx_1,
\label{eq:geodesic_ode}
\end{equation}
traced backward to a reference endpoint \(\gamma_{\vx_1}^\star(0)=\vx_0\).
\end{definition}

If \(\vx_0\sim p_0\) and \(\vx_t\) follows Eq.~\eqref{eq:geodesic_ode} forward in time, then \(\vx_t\sim\rho_t^\star\) for each \(t\). Notice the precise statement: \(\rho_t^\star\) interpolates between \(p_0\) and \(p_1\). We do not need or claim \(\rho_t^\star=p_1\) for all \(t\).

\begin{definition}[Transport-geodesic Aumann-Shapley attribution]
\label{def:geodesic_as}
For a differentiable target score \(f_c\) and an input \(\vx_1\), the transport-geodesic Aumann-Shapley attribution is
\begin{equation}
\Psi_i(f_c,\vx_1)=
\int_0^1
\frac{\partial f_c(\gamma_{\vx_1}^\star(t))}{\partial x_i}
\frac{\mathrm{d}\gamma_{\vx_1,i}^\star(t)}{\mathrm{d}t}
\dd t.
\label{eq:geodesic_as}
\end{equation}
\end{definition}

Definition~\ref{def:geodesic_as} combines the two parts of the paper. The Aumann-Shapley integral allocates credit along a path, and the optimal transport problem selects the path. The explanation is canonical only relative to the ideal transport problem and its regularity assumptions. It is best understood as a principled transport alternative to baseline paths and Riemannian path choices, not as a universal dominance claim.
This definition separates two roles that are often entangled. The generative
transport flow determines where the explanation path goes, while the predictor
\(f_c\) determines how the score changes along that path. Thus, the path is not
chosen by the same sensitivity geometry that is being explained.

\begin{theorem}[Axiomatic and transport-geodesic characterization]
\label{thm:geodesic_characterization}
Suppose Assumptions~\ref{ass:fixed_path_axioms},~\ref{ass:trace_richness}, and~\ref{ass:transport_regular} hold. Among all attribution rules that first choose a kinetic-action-minimizing flow from \(p_0\) to \(p_1\) and then apply a fixed-path attribution rule to its characteristic path, Eq.~\eqref{eq:geodesic_as} is the unique rule satisfying the fixed-path axioms.
\end{theorem}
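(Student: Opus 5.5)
The plan is to reduce Theorem~\ref{thm:geodesic_characterization} to Theorem~\ref{thm:fixed_path_uniqueness}. The argument has two stages: first show that the path-selection stage produces a single, well-defined regular $C^1$ path for each input, then apply fixed-path uniqueness to that path. Any rule in the class considered has the form $\vx_1\mapsto \bm{A}(f_c,\gamma_{\vx_1})$, where $\gamma_{\vx_1}$ is a characteristic of some kinetic-action minimizer and $\bm{A}$ is a fixed-path rule obeying Assumptions~\ref{ass:fixed_path_axioms} and~\ref{ass:trace_richness}. So it suffices to prove that $\gamma_{\vx_1}=\gamma^\star_{\vx_1}$ and that $\bm{A}(f_c,\gamma^\star_{\vx_1})=\Phi(f_c,\gamma^\star_{\vx_1})$. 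The latter equals $\Psi(f_c,\vx_1)$ by Eq.~\eqref{eq:geodesic_as}.

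\textbf{Step 1: the path is unique.} By Assumption~\ref{ass:transport_regular}, the minimizer of Eq.~\eqref{eq:kinetic_action} subject to Eq.~\eqref{eq:continuity} is unique; this is the Benamou--Brenier formula \eqref{eq:benamou_brenier} together with uniqueness of the optimal dynamic plan. Hence every admissible rule uses the same pair $(\rho^\star,\vv^\star)$. One caveat needs a remark: $\vv^\star$ is only determined $\rho^\star_t$-a.e. I would handle this by taking the canonical representative, namely the locally Lipschitz field posited in the assumption, restricted to the compact region containing the attribution trajectories.

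On that region, local Lipschitz continuity gives existence and uniqueness of the backward solution of Eq.~\eqref{eq:geodesic_ode} with terminal condition $\gamma(1)=\vx_1$, by Picard--Lindel\"of applied in reversed time. So $\gamma_{\vx_1}=\gamma^\star_{\vx_1}$, and the path is $C^1$ because it solves an ODE with a continuous right-hand side. For this I need $\vv^\star_t$ to be continuous in $t$ as well. I would state this as part of reading the regularity assumption, or note that displacement interpolation gives $\dot\gamma^\star(t)=T(\vx_0)-\vx_0$, which is constant along the curve.

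\textbf{Step 2: regularity of the path.} Theorem~\ref{thm:fixed_path_uniqueness} requires $\dot\gamma^\star(t)\neq\bm{0}$ for all $t$. For a Wasserstein-2 geodesic, characteristics are straight segments $\gamma^\star(t)=(1-t)\vx_0+tT(\vx_0)$ traversed at constant speed. So $\dot\gamma^\star\neq\bm{0}$ exactly when $\vx_1\neq\vx_0$. In the degenerate case $\vx_1=\vx_0$ the path is constant. Every trace $h_i^f$ then vanishes identically, so dummy forces $A_i=0=\Phi_i$, and the conclusion holds directly.

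\textbf{Step 3: conclusion.} With $\gamma^\star_{\vx_1}$ fixed, regular, and $C^1$, Theorem~\ref{thm:fixed_path_uniqueness} gives $A_i(f_c,\gamma^\star_{\vx_1})=\Phi_i(f_c,\gamma^\star_{\vx_1})=\Psi_i(f_c,\vx_1)$ for every $i$. For the converse, I need Eq.~\eqref{eq:geodesic_as} to actually satisfy the fixed-path axioms. Efficiency follows from the chain rule and reparameterization invariance from change of variables, both noted after Definition~\ref{def:as_path}. Linearity is immediate, and dummy is immediate. Trace-determinacy and continuity follow because $\Phi_i$ is the integral of $h_i^f$, which is $1$-Lipschitz in the sup norm on $[0,1]$.

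I expect the main obstacle to be Step 1. The issues are the a.e.\ non-uniqueness of $\vv^\star$, the need for $\vx_1$ to lie in the region where the Lipschitz assumption applies, and joint continuity in $(t,\vx)$ so that the characteristic is a genuine $C^1$ path. I would first settle these by invoking the Brenier map and the explicit straight-line form of characteristics. That form makes the path unique and $C^1$ without requiring delicate ODE theory.
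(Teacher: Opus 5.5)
Your proposal is correct and follows the paper's proof: uniqueness of the kinetic-action minimizer under Assumption~\ref{ass:transport_regular} fixes the characteristic path $\gamma^\star_{\vx_1}$, Theorem~\ref{thm:fixed_path_uniqueness} then forces $A_i=\Phi_i=\Psi_i$, and the converse is checked by verifying directly that the Aumann--Shapley rule satisfies the axioms. Your extra care goes beyond what the paper's proof addresses: you choose the locally Lipschitz representative of $\vv^\star$, establish well-posedness and $C^1$ regularity of the backward characteristic, and obtain nonvanishing velocity from the straight-line displacement form. The one caveat is your degenerate case $\vx_1=\vx_0$, where you invoke dummy on a constant path even though Assumption~\ref{ass:fixed_path_axioms} formally imposes the axioms only on regular paths, so there you are reading ``satisfying the fixed-path axioms'' as applying to the selected path itself.
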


Theorem~\ref{thm:geodesic_characterization} says that there are two sources of uniqueness. The fixed-path axioms force the line integral, while the kinetic-action principle chooses the distributional path. This is the precise version of the paper's main claim. It is not a statement of strict manifold membership, and it does not impose a total ordering over all attribution methods.

The construction reduces to familiar cases. If \(\gamma\) is the straight line from a fixed baseline to \(\vx_1\), Eq.~\eqref{eq:as_path} recovers Integrated Gradients. If the model is additive, namely \(f(\vx)=b+\sum_i f_i(x_i)\), and the transport path separates across coordinates, Eq.~\eqref{eq:geodesic_as} returns the coordinatewise finite differences \(f_i(x_{1,i})-f_i(x_{0,i})\), which match the classical Shapley allocation for the induced additive game.

We also need a stability guarantee because the ideal flow is not observed. Let \(\hat{\vv}_t(\vx)=\vv_{\vtheta}(\vx,t)\) denote the learned vector field. For an observed endpoint \(\vx_1\), let \(\hat{\gamma}_{\vx_1}\) be the learned characteristic defined by
\[
\frac{\mathrm{d}\hat{\gamma}_{\vx_1}(t)}{\mathrm{d}t}=\hat{\vv}_t(\hat{\gamma}_{\vx_1}(t)),\qquad \hat{\gamma}_{\vx_1}(1)=\vx_1.
\]
Define \(\hat{\Psi}_i\) by replacing \(\gamma_{\vx_1}^\star\) with \(\hat{\gamma}_{\vx_1}\) in Eq.~\eqref{eq:geodesic_as}.

\begin{assumption}[Stability regularity]
\label{ass:stability_regular}
The ideal and learned trajectories remain in a compact set \(\cK\subset\R^d\). On \(\cK\), the target score \(f_c\) has bounded gradient and Lipschitz gradient, and both vector fields are uniformly Lipschitz in \(\vx\). Moreover,
\[
\sup_{t\in[0,1],\vx\in\cK}\|\hat{\vv}_t(\vx)-\vv_t^\star(\vx)\|_2\le\eps.
\]
\end{assumption}

\begin{theorem}[Stability under flow approximation]
\label{thm:stability}
Under Assumption~\ref{ass:stability_regular}, there is a constant \(C\), depending only on the compact set, the time horizon, the Lipschitz constants of the vector fields, and the first two derivative bounds of \(f_c\), such that for every coordinate \(i\),
\begin{equation}
|\Psi_i(f_c,\vx_1)-\hat{\Psi}_i(f_c,\vx_1)|\le C\eps.
\label{eq:stability_bound}
\end{equation}
\end{theorem}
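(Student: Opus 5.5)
We split the argument into a trajectory-comparison step and an integrand-comparison step. Write \(\gamma^\star=\gamma_{\vx_1}^\star\) and \(\hat\gamma=\hat\gamma_{\vx_1}\), both solving backward ODEs with the same terminal condition at \(t=1\). Let \(L_f=\sup_{\cK}\|\nabla f_c\|_2\) and \(H_f\) the Lipschitz constant of \(\nabla f_c\) on \(\cK\). Let \(L\) be a common Lipschitz constant of \(\vv_t^\star\) and \(\hat\vv_t\) in \(\vx\), and \(M=\sup_{t,\vx\in\cK}\|\vv^\star_t(\vx)\|_2\), which is finite by continuity on the compact set.

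\textbf{Step 1 (Gronwall bound on trajectories).} Set \(e(t)=\|\hat\gamma(t)-\gamma^\star(t)\|_2\), so that \(e(1)=0\). For \(s\le 1\), integrate backward from \(1\) to \(s\):
\[
\hat\gamma(s)-\gamma^\star(s)=-\int_s^1\bigl(\hat\vv_t(\hat\gamma(t))-\vv^\star_t(\gamma^\star(t))\bigr)\dd t .
\]
Split the integrand as \(\hat\vv_t(\hat\gamma)-\vv^\star_t(\hat\gamma)\) plus \(\vv^\star_t(\hat\gamma)-\vv^\star_t(\gamma^\star)\). The first piece is at most \(\eps\), because \(\hat\gamma(t)\in\cK\). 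The second is at most \(L e(t)\). This gives
\[
e(s)\le \eps(1-s)+L\int_s^1 e(t)\dd t,
\]
and Gronwall's inequality in reversed time yields \(\sup_t e(t)\le \eps\,(e^{L}-1)/L\le \eps e^{L}\). We also control the velocity gap:
\[
\|\dot{\hat\gamma}(t)-\dot\gamma^\star(t)\|_2\le \eps+L e(t)\le \eps(1+Le^{L}) .
\]

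\textbf{Step 2 (comparing integrands).} For fixed \(i\), write the difference of integrands as
\[
\partial_i f_c(\gamma^\star)\dot\gamma^\star_i-\partial_i f_c(\hat\gamma)\dot{\hat\gamma}_i
=\bigl(\partial_i f_c(\gamma^\star)-\partial_i f_c(\hat\gamma)\bigr)\dot\gamma^\star_i+\partial_i f_c(\hat\gamma)\bigl(\dot\gamma^\star_i-\dot{\hat\gamma}_i\bigr).
\]
The first term is bounded by \(H_f e(t)\,M\), using \(|\dot\gamma^\star_i|\le\|\vv^\star_t(\gamma^\star(t))\|_2\le M\). The second is bounded by \(L_f\,\eps(1+Le^{L})\). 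Integrating over \([0,1]\) gives Eq.~\eqref{eq:stability_bound} with
\[
C=H_f M e^{L}+L_f(1+Le^{L}).
\]
This constant depends only on the quantities listed in Theorem~\ref{thm:stability}: \(M\) is determined by the vector field on \(\cK\), and the time horizon is \(1\).

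\textbf{Expected obstacle.} Two points need care. First, the Gronwall step runs backward in time from a shared terminal point, so it must be set up in reversed time. Second, the bound needs a uniform bound \(M\) on the ideal velocity. The assumption only states Lipschitz continuity in \(\vx\), so \(M\) has to come from continuity of \(\vv^\star\) on the compact set \([0,1]\times\cK\). If that continuity is not available, I would add a measurability/boundedness remark, or absorb the velocity bound into the dependence of \(C\) on the vector fields.

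Both trajectories stay in \(\cK\) by assumption, which is what licenses the sup-norm bound \(\eps\) and the Lipschitz estimates. Since \(\hat\gamma(0)\) and \(\gamma^\star(0)\) may differ, the endpoints of the two paths do not coincide. The argument does not need them to: it compares the integrals directly and never invokes efficiency.
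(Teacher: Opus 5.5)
Your proposal is correct and follows the same route as the paper. Both arguments use a backward Gronwall bound \(\sup_t\|\gamma^\star(t)-\hat\gamma(t)\|_2\le e^{L}\eps\), then a product-rule split of the integrand difference, and both arrive at the constant \(B_g(L_v e^{L_v}+1)+B_v L_g e^{L_v}\) up to renaming. Your remark that the velocity bound \(M\) needs continuity of \(\vv^\star\) on \([0,1]\times\cK\) is well taken: the paper simply asserts that this constant is finite "by compactness" without stating the time-continuity it relies on.
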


Theorem~\ref{thm:stability} gives the main engineering message. Improving the learned vector field improves the attribution in a controlled way. The theorem does not say that the learned path is exact optimal transport. It says that, when the learned vector field approaches the ideal field on the relevant region, the attribution approaches the ideal transport-geodesic attribution.

\section{Implementation and experiments}
\label{sec:experiments}

We instantiate the ideal construction with Rectified Flow and Reflow. Rectified Flow learns a time-dependent vector field \(\vv_{\vtheta}(\vx,t)\) that moves samples from a reference distribution to the data distribution. Given paired samples \((\vz_0,\vz_1)\sim\pi\), where \(\pi\) is a coupling between \(p_0\) and \(p_1\), the basic training objective is
\begin{equation}
\min_{\vtheta}\;\E_{t,\vz_0,\vz_1}\left\|\vv_{\vtheta}((1-t)\vz_0+t\vz_1,t)-(\vz_1-\vz_0)\right\|_2^2.
\label{eq:rectified_flow_loss}
\end{equation}
Reflow improves the coupling by first generating trajectories with a learned flow and then retraining on the induced endpoint pairs \citep{liu2022flow}. This procedure is useful for us because lower-curvature and lower-action paths are closer to the geodesic ideal in Eq.~\eqref{eq:benamou_brenier}.

For a target input \(\vx_1\), we integrate the learned ODE backward to obtain a reference endpoint \(\hat{\vx}_0\), then integrate forward to obtain states \(\hat{\vx}_0,\ldots,\hat{\vx}_K\). Throughout the experiments, \(K\) denotes the number of integration steps, \(t_k=k/K\), \(\Delta t=1/K\), and \(\hat{\vx}_k\approx\hat{\gamma}_{\vx_1}(t_k)\). We compute gradients of the target score at these states and approximate Eq.~\eqref{eq:geodesic_as} by a Riemann sum:
\begin{equation}
\hat{\Psi}_i(f_c,\vx_1)=\sum_{k=0}^{K-1}
\frac{\partial f_c(\hat{\vx}_k)}{\partial x_i}
(\hat{x}_{k+1,i}-\hat{x}_{k,i}).
\label{eq:discrete_estimator}
\end{equation}
We use the same number of integration steps for path-based baselines whenever possible. The cost of one explanation is linear in \(K\): it requires ODE evaluations for the generative path and \(K+1\) gradient evaluations of the predictor. This is much cheaper than exact discrete Shapley, whose cost grows exponentially in the number of input features. In Alg.~\ref{alg:geodesic_as}, \(\odot\) denotes coordinatewise multiplication.

\begin{algorithm}[t]
\caption{Geodesic Aumann-Shapley attribution with a learned flow}
\label{alg:geodesic_as}
\begin{algorithmic}[1]
\Require target score \(f_c\), input \(\vx_1\), learned vector field \(\vv_{\vtheta}\), number of steps \(K\)
\State integrate \(\mathrm{d}\vx_t/\mathrm{d}t=\vv_{\vtheta}(\vx_t,t)\) backward from \(\vx_1\) to obtain \(\hat{\vx}_0\)
\State integrate the same ODE forward from \(\hat{\vx}_0\) and store \(\hat{\vx}_0,\ldots,\hat{\vx}_K\)
\State initialize \(\hat{\bm{\Psi}}=\bm{0}\)
\For{\(k=0,\ldots,K-1\)}
    \State compute \(\vg_k=\nabla_{\vx} f_c(\hat{\vx}_k)\)
    \State update \(\hat{\bm{\Psi}}\leftarrow\hat{\bm{\Psi}}+\vg_k\odot(\hat{\vx}_{k+1}-\hat{\vx}_k)\)
\EndFor
\State \Return \(\hat{\bm{\Psi}}\)
\end{algorithmic}
\end{algorithm}

\subsection{Evaluation questions}
\label{sec:evaluation_questions}

We organize experiments around four questions. These questions are meant to test the transport path-selection hypothesis, not to establish a universal ranking of attribution methods. Does the numerical integral satisfy efficiency as the step count increases? Does Reflow reduce the geometry gap relative to less rectified flows? Does attribution error track vector-field error as predicted by Theorem~\ref{thm:stability}? On real images, do transport-consistent paths give structured explanations without destroying deletion faithfulness?

The evaluation uses CUB-200 for the numerical completeness study \citep{WahCUB_200_2011}, controlled flow checkpoints for stability analysis, and CIFAR-10 plus CelebA-HQ for image benchmarks \citep{Krizhevsky09learningmultiple,DBLP:journals/corr/abs-1710-10196}. We compare with SmoothGrad \citep{DBLP:journals/corr/SmilkovTKVW17}, Guided Backpropagation \citep{springenberg2014striving}, GradientSHAP and KernelSHAP-style baselines \citep{lundberg2017unified}, Integrated Gradients \citep{sundararajan2017axiomatic}, and DDIM generative paths \citep{song2020denoising}. We report standard deletion metrics together with path diagnostics. We use the term Flow Consistency Error only for dynamical consistency with the learned vector field.

\subsection{Numerical completeness}
\label{sec:numerical_completeness}

Efficiency requires the attribution sum to match the score change. For \(N\) evaluated examples, we measure the residual
\[
R_{\mathrm{eff}}=\frac{1}{N}\sum_{j=1}^N\left|\sum_{i=1}^d\hat{\Psi}_i^{(j)}-\left(f_c(\vx_1^{(j)})-f_c(\hat{\vx}_0^{(j)})\right)\right|.
\]
Table~\ref{tab:completeness} shows convergence as the number of integration steps increases. The default \(K=50\) gives a practical balance, while \(K=100\) or \(K=200\) can be used when tighter efficiency residuals are desired.

\begin{table}[t]
\centering
\caption{Completeness residual for the discrete estimator in Eq.~\eqref{eq:discrete_estimator}. The residual decreases as the integration grid becomes finer.}
\label{tab:completeness}
\small
\begin{tabular}{ccccc}
\toprule
Steps \(K\) & MAE \(\downarrow\) & Std. dev. & SEM & Relative error \(\downarrow\) \\
\midrule
10  & 1.483 & 1.346 & 0.177 & 19.30\% \\
20  & 0.895 & 0.778 & 0.102 & 11.65\% \\
50  & 0.411 & 0.317 & 0.042 & 5.34\% \\
100 & 0.229 & 0.181 & 0.024 & 2.98\% \\
200 & 0.101 & 0.079 & 0.010 & 1.34\% \\
\bottomrule
\end{tabular}
\end{table}

\subsection{Path geometry and stability}
\label{sec:path_geometry_stability}

We next ask whether a more geodesic transport path produces a more stable attribution. We compare a one-step Rectified Flow baseline with a reflowed model over three seeds. We estimate discrete kinetic action by
\[
\widehat{\cA}=\sum_{k=0}^{K-1}\frac{\|\hat{\vx}_{k+1}-\hat{\vx}_k\|_2^2}{\Delta t}.
\]
We measure attribution stability with pixelwise variance, SSIM between attribution maps, and rank correlation of feature scores. Table~\ref{tab:stability_metrics} shows that Reflow moderately lowers action and substantially improves stability. We interpret this as evidence that path geometry matters for explanation stability. We avoid saying that this proves strict manifold adherence.

\begin{table}[t]
\centering
\caption{Path action and attribution stability across seeds. Lower-action reflowed paths produce more stable maps and more consistent feature rankings.}
\label{tab:stability_metrics}
\small
\begin{tabular}{lcccc}
\toprule
Method & Action \(\widehat{\cA}\) \(\downarrow\) & Pixel var. \(\downarrow\) & SSIM \(\uparrow\) & Rank corr. \(\uparrow\) \\
\midrule
1-RF & 3179.2 \(\pm\) 295.8 & 0.0032 \(\pm\) 0.0021 & 0.716 \(\pm\) 0.080 & 0.662 \(\pm\) 0.087 \\
2-RF & \textbf{3006.9} \(\pm\) \textbf{340.4} & \textbf{0.0010} \(\pm\) \textbf{0.0008} & \textbf{0.911} \(\pm\) \textbf{0.051} & \textbf{0.882} \(\pm\) \textbf{0.061} \\
\bottomrule
\end{tabular}
\end{table}

The stability bound in Theorem~\ref{thm:stability} predicts that attribution error should decrease when the learned vector field approaches a stronger oracle field. We treat a converged flow checkpoint as an oracle and compare earlier checkpoints against it. Let \(\bm{\Psi}^{\mathrm{ref}}\) and \(\vv_{t_k}^{\mathrm{ref}}\) denote the attribution and velocity field of this reference checkpoint. We use \(\|\hat{\bm{\Psi}}-\bm{\Psi}^{\mathrm{ref}}\|_2/(\|\bm{\Psi}^{\mathrm{ref}}\|_2+10^{-12})\) as the relative attribution error and
\[
E_{\mathrm{field}}=
\frac{\left(\sum_{k=0}^{K-1}\|\hat{\vv}_{t_k}(\hat{\vx}_k)-\vv_{t_k}^{\mathrm{ref}}(\hat{\vx}_k)\|_2^2\right)^{1/2}}{\left(\sum_{k=0}^{K-1}\|\vv_{t_k}^{\mathrm{ref}}(\hat{\vx}_k)\|_2^2\right)^{1/2}+10^{-12}}
\]
as the empirical flow approximation error. Across checkpoints, these two quantities show an approximately linear relation, with median Pearson correlation above \(0.95\). This supports the narrower claim that generative flow quality controls the reliability of the transport-based attribution we study. We keep the full scatter plots for the appendix so that the main text can focus on the central evidence in Table~\ref{tab:stability_metrics}.

\subsection{Image benchmarks}
\label{sec:image_benchmarks}

For CIFAR-10 and CelebA-HQ, we report both path diagnostics and standard explanation metrics. The geometric path straightness score is
\[
\mathrm{GPS}=\frac{\sum_{k=0}^{K-1}\|\hat{\vx}_{k+1}-\hat{\vx}_k\|_2}{\|\hat{\vx}_K-\hat{\vx}_0\|_2}.
\]
A value near one indicates a nearly straight discrete path in ambient space. The Flow Consistency Error is
\begin{equation}
\mathrm{FCE}=\frac{1}{K}\sum_{k=0}^{K-1}\left\|\frac{\hat{\vx}_{k+1}-\hat{\vx}_k}{\Delta t}-\hat{\vv}_{t_k}(\hat{\vx}_k)\right\|_2^2.
\label{eq:fce}
\end{equation}
Equation~\eqref{eq:fce} measures consistency with the learned dynamics. It does not certify membership in the true data distribution. For spatial structure, we use a Structure-Aware Total Variation diagnostic,
\begin{equation}
\mathrm{SATV}(\vphi)=
\sum_{(u,v)\in\Omega}
\|\nabla_{\mathrm{img}}\vphi_{u,v}\|_1
\exp\left(-\alpha\|\nabla_{\mathrm{img}} I_{u,v}\|_2\right),
\label{eq:satv}
\end{equation}
where \(\Omega\) is the pixel grid, \(I\) is the input image, \(\nabla_{\mathrm{img}}\) denotes finite differences on the image grid, and \(\alpha=10\). Equation~\eqref{eq:satv} penalizes high-frequency saliency variation in flat image regions while allowing changes near image edges. We also report the Edge Alignment Score (EAS), which measures whether attribution-map edges align with image edges, and deletion scores under zero and blur replacement.

\begin{table}[t]
\centering
\caption{CIFAR-10 and CelebA-HQ benchmark results. We report path diagnostics only for methods that define a continuous path. Our method has much lower FCE than IG and DDIM, while deletion faithfulness remains competitive rather than uniformly best.}
\label{tab:main_benchmark}
\scriptsize
\setlength{\tabcolsep}{2.4pt}
\resizebox{\linewidth}{!}{%
\begin{tabular}{c l c >{\columncolor{pgcblue}}c c c c c}
\toprule
Data & Method & GPS & FCE \(\downarrow\) & SATV \(\downarrow\) & EAS \(\uparrow\) & Del. zero \(\downarrow\) & Del. blur \(\downarrow\) \\
\midrule
\multirow{7}{*}{CIFAR-10}
& SmoothGrad & - & - & 0.022 \(\pm\) 0.008 & 0.363 \(\pm\) 0.117 & 0.525 \(\pm\) 0.923 & 1.216 \(\pm\) 1.202 \\
& GuidedBackprop & - & - & 0.040 \(\pm\) 0.018 & 0.439 \(\pm\) 0.128 & 0.966 \(\pm\) 0.978 & 1.627 \(\pm\) 1.064 \\
& GradientSHAP & - & - & 0.669 \(\pm\) 0.408 & 0.116 \(\pm\) 0.146 & 0.691 \(\pm\) 0.975 & 1.978 \(\pm\) 1.050 \\
\cmidrule(lr){2-8}
& Integrated Gradients & 1.000 \(\pm\) 0.000 & \((1.0\pm0.1)\times10^4\) & 0.643 \(\pm\) 0.404 & 0.114 \(\pm\) 0.134 & 0.670 \(\pm\) 0.965 & 1.931 \(\pm\) 1.105 \\
& DDIM & 1.059 \(\pm\) 0.017 & \((3.1\pm0.5)\times10^3\) & 0.073 \(\pm\) 0.062 & 0.119 \(\pm\) 0.160 & \textbf{0.444} \(\pm\) \textbf{0.940} & 1.360 \(\pm\) 1.143 \\
\rowcolor{oursred}\cellcolor{white} & Transport Flow & 1.023 \(\pm\) 0.008 & \textbf{0.005} \(\pm\) \textbf{0.003} & \textbf{0.062} \(\pm\) \textbf{0.048} & \textbf{0.119} \(\pm\) \textbf{0.138} & 0.456 \(\pm\) 0.967 & \textbf{1.311} \(\pm\) \textbf{1.177} \\
\midrule
\multirow{7}{*}{CelebA-HQ}
& SmoothGrad & - & - & 0.001 \(\pm\) 0.000 & 0.232 \(\pm\) 0.105 & 0.216 \(\pm\) 0.356 & 0.631 \(\pm\) 0.483 \\
& GuidedBackprop & - & - & 0.001 \(\pm\) 0.000 & 0.385 \(\pm\) 0.097 & 0.222 \(\pm\) 0.313 & 0.475 \(\pm\) 0.434 \\
& GradientSHAP & - & - & 0.011 \(\pm\) 0.004 & 0.061 \(\pm\) 0.114 & 0.188 \(\pm\) 0.305 & 1.107 \(\pm\) 0.663 \\
\cmidrule(lr){2-8}
& Integrated Gradients & 1.000 \(\pm\) 0.000 & \((1.3\pm0.1)\times10^6\) & 0.010 \(\pm\) 0.003 & 0.060 \(\pm\) 0.114 & 0.188 \(\pm\) 0.304 & 1.108 \(\pm\) 0.662 \\
& DDIM & 1.047 \(\pm\) 0.006 & \((2.8\pm0.1)\times10^5\) & \textbf{0.003} \(\pm\) \textbf{0.001} & \textbf{0.147} \(\pm\) \textbf{0.097} & \textbf{0.175} \(\pm\) \textbf{0.321} & \textbf{0.897} \(\pm\) \textbf{0.579} \\
\rowcolor{oursred}\cellcolor{white} & Transport Flow & 1.011 \(\pm\) 0.005 & \textbf{1.780} \(\pm\) \textbf{0.557} & \textbf{0.003} \(\pm\) \textbf{0.001} & 0.091 \(\pm\) 0.101 & 0.184 \(\pm\) 0.320 & 0.926 \(\pm\) 0.587 \\
\bottomrule
\end{tabular}}
\end{table}

The main conclusion from Table~\ref{tab:main_benchmark} is not that one method dominates every metric. On CIFAR-10, Transport Flow improves FCE by several orders of magnitude relative to IG and DDIM and has the best blur-deletion score among the path-based methods. On CelebA-HQ, DDIM has slightly better deletion scores, while Transport Flow keeps comparable deletion performance and much lower FCE. These results support the paper's intended claim: transport-consistent path selection improves the geometry and stability of this family of explanations while preserving competitive faithfulness. They should not be read as a claim that Transport Flow is uniformly better than all Riemannian or perturbation-based attribution methods.

\begin{figure}[t]
\centering
\includegraphics[width=0.98\linewidth]{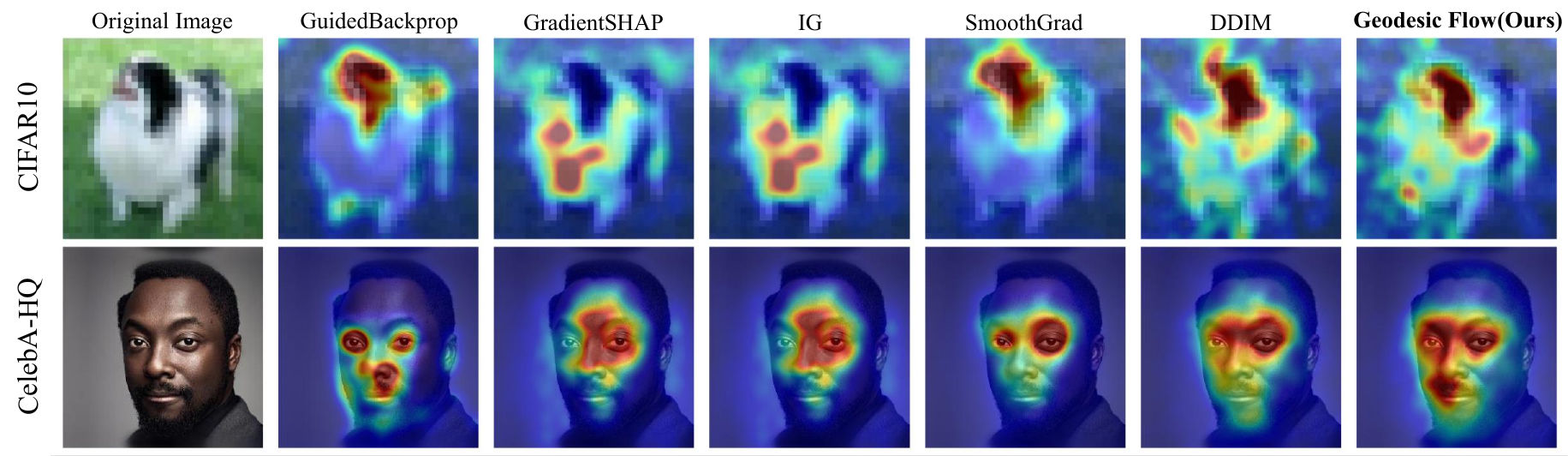}
\caption{Qualitative visualization on CIFAR-10 and CelebA-HQ. Transport Flow explanations are spatially coherent and avoid some high-frequency artifacts of straight-line Integrated Gradients. The figure should be read together with Table~\ref{tab:main_benchmark}; visual clarity alone is not a complete faithfulness metric.}
\label{fig:qual_vis_combined}
\end{figure}

The qualitative examples in Figure~\ref{fig:qual_vis_combined} match the quantitative trend. Straight-line IG often produces scattered patterns because it evaluates gradients along a path that is easy to compute but not adapted to the generative transition. DDIM gives a structured generative path but can be curved and sensitive to discretization. Our flow path is not a proof of true manifold membership, yet it is dynamically consistent with the learned transport field and empirically gives structured attributions.

\section{Conclusion}
\label{sec:conclusion}

We reformulate feature attribution as a path-selection problem: fixed-path Aumann-Shapley axioms determine how score change is allocated, while a least-action generative transport from a reference distribution to the data distribution provides a complementary transport-geodesic path whose Rectified Flow/Reflow approximation yields more stable and dynamically consistent attributions with competitive faithfulness, without claiming strict manifold membership or universal superiority across metrics.

\bibliographystyle{unsrtnat}
\bibliography{references}

\appendix
\clearpage
\section*{Appendix Contents}
\label{app:contents}
\addcontentsline{toc}{section}{Appendix Contents}

\noindent This appendix gives the details that are not needed for the main narrative but are useful for checking the claims. We keep the main paper focused on the path-selection story. The appendix contains full proofs, the mathematical background behind the transport path, implementation details, metric definitions, additional diagnostics, and limitations.

\vspace{0.8em}
\noindent\hyperref[app:proofs]{Appendix~\ref*{app:related_work}: Related Work}\dotfill\pageref{app:related_work}\\[0.25em]
\noindent\hyperref[app:proofs]{Appendix~\ref*{app:proofs}: Proofs of main results}\dotfill\pageref{app:proofs}\\[0.25em]
\hyperref[app:transport_details]{Appendix~\ref*{app:transport_details}: Optimal transport details and the meaning of geodesic}\dotfill\pageref{app:transport_details}\\[0.25em]
\hyperref[app:implementation_details]{Appendix~\ref*{app:implementation_details}: Implementation details}\dotfill\pageref{app:implementation_details}\\[0.25em]
\hyperref[app:metrics_details]{Appendix~\ref*{app:metrics_details}: Evaluation metrics and protocols}\dotfill\pageref{app:metrics_details}\\[0.25em]
\hyperref[app:additional_results]{Appendix~\ref*{app:additional_results}: Additional results and visualizations}\dotfill\pageref{app:additional_results}\\[0.25em]
\hyperref[app:limitations]{Appendix~\ref*{app:limitations}: Limitations and scope of the claims}\dotfill\pageref{app:limitations}

\section{Related Work}
\label{app:related_work}
Feature attribution has several classic lines. Gradient saliency methods explain a prediction by differentiating the output with respect to the input or an internal representation. Early saliency maps, Grad-CAM, SmoothGrad, and Guided Backpropagation are representative examples \citep{simonyan2013deep,selvaraju2017grad,DBLP:journals/corr/SmilkovTKVW17,springenberg2014striving}. These methods are efficient, but a local gradient does not by itself describe the finite change from a reference state to the input. Perturbation methods instead mask or modify input regions and measure the output change. Occlusion, meaningful perturbation, LIME, RISE, and SHAP-style methods belong to this family \citep{zeiler2014visualizing,fong2017interpretable,ribeiro2016should,petsiuk2018rise,lundberg2017unified}. They connect more directly to counterfactual reasoning, but they must choose how to replace missing content. Backpropagation decomposition methods such as Layer-wise Relevance Propagation, Deep Taylor decomposition, and DeepLIFT propagate relevance through network layers using conservation or difference rules \citep{bach2015pixel,montavon2017explaining,shrikumar2017learning}. These methods motivate conservation of total relevance, but they do not solve the global path-selection problem studied here.

Shapley-value explanations inherit their appeal from cooperative game theory \citep{shapley1953value}. The Shapley value is unique for a finite game once the coalition value function is fixed. In model explanation, however, the coalition value function is not fixed by the predictor alone. Interventional, conditional, causal, and asymmetric Shapley variants encode different assumptions about feature absence and feature dependence \citep{strumbelj2010efficient,lundberg2017unified,sundararajan2020many,frye2020asymmetric}. Manifold-restricted Shapley methods use generative or conditional models to keep perturbations closer to the data distribution \citep{frye2020shapley,taufiq2023manifold}. These works identify the missing-feature problem. We take a different route. We do not define all subset values. We formulate a continuous path game and ask how to choose the path that reveals the input.

Path attribution methods show that continuous paths can replace discrete coalitions. Integrated Gradients is the most widely used example and can be viewed as an Aumann-Shapley line integral along the straight path from a baseline to the input \citep{sundararajan2017axiomatic,aumann2015values}. Expected Gradients averages Integrated Gradients over a background distribution \citep{erion2021improving}. Guided Integrated Gradients and related adaptive methods alter the route in input space to reduce visual artifacts \citep{kapishnikov2021guided}. Recent geometric variants make the role of the path even more explicit. Manifold Integrated Gradients and Geodesic Integrated Gradients replace the straight path with a Riemannian geodesic under a learned data geometry or a model-induced geometry \citep{zaher2024manifold,salek2025using}. We view these methods as close and complementary, not as targets that our method must dominate. Their path-selection object is an instance-level geodesic in a prescribed
input-space metric, and in Geodesic Integrated Gradients this metric is induced
by the explained model. Our path-selection object is different: it is a
characteristic curve of a distribution-level least-action transport process
from \(p_0\) to \(p_1\). Thus, we separate the geometry that chooses the path
from the classifier whose score is explained. The generative transport model
chooses the path, and the classifier gradient only allocates score change along
that path. This changes the question from how to choose a better curve between two endpoints to how to select a generative transport process between two distributions.

Generative models provide structured transitions from simple priors to data-like samples. Diffusion models and DDIM define reverse-time generative trajectories, while continuous normalizing flows and flow matching define time-dependent vector fields \citep{song2020denoising,lipman2022flow}. Rectified Flow learns straighter trajectories by regressing velocities between coupled reference and data samples, and Reflow further rectifies the learned coupling \citep{liu2022flow}. These models are useful for attribution because they provide more structured trajectories than pixel-space interpolation. Still, a generative path is not automatically a principled explanation path. We use optimal transport to specify which generative path we want.

Optimal transport gives a geometric way to compare probability distributions \citep{villani2008optimal}. In the dynamic Benamou-Brenier formulation, the squared Wasserstein-2 distance is the minimum kinetic action among all density paths and velocity fields that move one distribution to another \citep{benamou2000computational}. This formulation matches our attribution problem. If explaining an input means measuring the model score along a transition from a reference distribution \(p_0\) to a data distribution \(p_1\), then the transition path should not be arbitrary. We use the Wasserstein-2 geodesic as the least-action path-selection principle.

Evaluation of attributions remains difficult. Deletion, insertion, faithfulness correlation, and sanity checks measure different aspects of explanation quality \citep{samek2016evaluating,hooker2019benchmark,tomsett2020sanity}. These metrics can favor noisy maps or penalize smooth maps depending on the perturbation protocol. We therefore report standard faithfulness metrics together with path-geometry diagnostics. Our Flow Consistency Error measures consistency with the learned vector field. It is not a certificate of true manifold membership.
\section{Proofs of main results}
\label{app:proofs}

This section proves the formal statements used in the main text. We keep the assumptions visible because the main claim is a representation claim. The paper does not say that every attribution rule is forced without restrictions. It says that, once we restrict attention to coordinatewise path rules that use only the corresponding coordinate trace and that obey efficiency, linearity, dummy, and reparameterization invariance, the line integral is forced.

\begin{lemma}[Coordinate-trace representation]
\label{lem:trace_representation}
Fix a path \(\gamma\) and a coordinate \(i\). Under the coordinate-trace determined part of Assumption~\ref{ass:fixed_path_axioms}, there is a functional \(L_i\) on the set of admissible scalar traces
\[
\mathcal{H}_i(\gamma)=\left\{t\mapsto \frac{\partial f(\gamma(t))}{\partial x_i}\frac{\mathrm{d}\gamma_i(t)}{\mathrm{d}t}: f\in\cF\right\}
\]
such that
\[
A_i(f,\gamma)=L_i\left(t\mapsto \frac{\partial f(\gamma(t))}{\partial x_i}\frac{\mathrm{d}\gamma_i(t)}{\mathrm{d}t}\right).
\]
Moreover, if the attribution rule is linear in \(f\), then \(L_i\) is linear on \(\mathcal{H}_i(\gamma)\). If the attribution rule is continuous under uniform convergence of coordinate traces, then \(L_i\) is continuous for the uniform norm.
\end{lemma}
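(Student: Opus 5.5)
The plan is to define \(L_i\) directly on \(\mathcal{H}_i(\gamma)\) by choosing representatives, and then to read off linearity and continuity from the corresponding properties of \(A_i\). The only substantive work is checking that \(L_i\) is well defined and that \(\mathcal{H}_i(\gamma)\) is a linear space.

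\textbf{Step 1: construction.} For \(h\in\mathcal{H}_i(\gamma)\), pick any \(f\in\cF\) with \(h_i^f=h\) and set \(L_i(h)=A_i(f,\gamma)\). Well-definedness is exactly the coordinate-trace determined clause of Assumption~\ref{ass:fixed_path_axioms}. If \(f,g\in\cF\) satisfy \(h_i^f=h_i^g\) pointwise on \([0,1]\), then \(A_i(f,\gamma)=A_i(g,\gamma)\), because \(A_i\) depends on the score along \(\gamma\) only through this trace. With this definition the representation \(A_i(f,\gamma)=L_i(h_i^f)\) holds by construction.

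\textbf{Step 2: linearity.} First, \(\mathcal{H}_i(\gamma)\) is a vector space. The map \(f\mapsto h_i^f\) is linear, since differentiation and multiplication by the fixed function \(\mathrm{d}\gamma_i/\mathrm{d}t\) are linear, and \(\cF\) is closed under finite linear combinations (Definition~\ref{def:path_rule}). So \(\mathcal{H}_i(\gamma)\) is the image of a linear map, hence a linear subspace of \(C([0,1])\).

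Now take \(h=h_i^f\) and \(k=h_i^g\). Then \(ah+bk=h_i^{af+bg}\), and linearity of the rule in the score gives
\[
L_i(ah+bk)=A_i(af+bg,\gamma)=aA_i(f,\gamma)+bA_i(g,\gamma)=aL_i(h)+bL_i(k).
\]
This computation is independent of the chosen representatives, by Step 1.

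\textbf{Step 3: continuity.} Suppose \(h_n\to h\) uniformly in \(\mathcal{H}_i(\gamma)\), with representatives \(f_n,f\in\cF\). Continuity of the rule under uniform convergence of the trace states precisely that \(A_i(f_n,\gamma)\to A_i(f,\gamma)\), that is, \(L_i(h_n)\to L_i(h)\). Because \(L_i\) is linear, this sequential continuity on a normed space is equivalent to boundedness in the sup norm. That boundedness is what the later Hahn–Banach/Riesz step needs in order to extend \(L_i\) to all of \(C([0,1])\).

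\textbf{Main obstacle.} The delicate point is interpretive: making sure that ``depends only through \(h_i^f\)'' is read pointwise on all of \([0,1]\), so that Step 1 is genuinely well defined. One should also note that the traces are continuous, since \(f\in C^1\) and \(\gamma\in C^1\), so the uniform norm is finite on \(\mathcal{H}_i(\gamma)\).
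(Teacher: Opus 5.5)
Your proposal is correct and follows the paper's proof essentially step for step. Both arguments get well-definedness of $L_i$ from the trace-determined clause, get linearity from $h_i^{af+bg}=ah_i^f+bh_i^g$ together with linearity of $A_i$, and read continuity directly off the continuity clause. Your explicit check that $\mathcal{H}_i(\gamma)$ is a linear subspace is a useful detail the paper leaves implicit. The Hahn--Banach/Riesz aside is harmless but unnecessary: the paper's uniqueness theorem identifies $L_i$ with the integral using trace richness, dummy, and efficiency instead.
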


\begin{proof}
The coordinate-trace determined condition says exactly that two admissible scores with the same coordinate trace for coordinate \(i\) must receive the same coordinate-\(i\) attribution. Hence we can define \(L_i(h)\) as \(A_i(f,\gamma)\) for any \(f\in\cF\) whose coordinate trace equals \(h\). This definition does not depend on which such \(f\) we choose. If \(h_f\) and \(h_g\) are traces generated by \(f\) and \(g\), then the trace generated by \(af+bg\) is \(ah_f+bh_g\). Linearity of \(A_i\) in the score gives
\[
L_i(ah_f+bh_g)=A_i(af+bg,\gamma)=aA_i(f,\gamma)+bA_i(g,\gamma)=aL_i(h_f)+bL_i(h_g).
\]
This proves linearity of \(L_i\). The continuity statement follows directly from the continuity clause in Assumption~\ref{ass:fixed_path_axioms}.
\end{proof}

\begin{proof}[Proof of Theorem~\ref{thm:fixed_path_uniqueness}]
For each coordinate \(i\), Lemma~\ref{lem:trace_representation} gives a linear functional \(L_i\) on the coordinate-trace space. We compare it with the ordinary integral functional \(I(h)=\int_0^1 h(t)\dd t\). Let \(h\in\mathcal{H}_i(\gamma)\) be any admissible coordinate trace. By Assumption~\ref{ass:trace_richness}, there is an admissible score \(g\) whose coordinate-\(i\) trace is \(h\) and whose coordinate-\(j\) trace is zero for every \(j\ne i\). The dummy property gives \(A_j(g,\gamma)=L_j(0)=0\) for every \(j\ne i\). Efficiency applied to \(g\) gives
\[
L_i(h)=\sum_{j=1}^d A_j(g,\gamma)=g(\gamma(1))-g(\gamma(0)).
\]
The chain rule along \(\gamma\) gives
\[
g(\gamma(1))-g(\gamma(0))
=\int_0^1\nabla g(\gamma(t))^\top\frac{\mathrm{d}\gamma(t)}{\mathrm{d}t}\dd t
=\sum_{j=1}^d\int_0^1\frac{\partial g(\gamma(t))}{\partial x_j}\frac{\mathrm{d}\gamma_j(t)}{\mathrm{d}t}\dd t.
\]
All terms except the \(i\)-th term vanish by the trace separation property, so
\[
g(\gamma(1))-g(\gamma(0))=\int_0^1 h(t)\dd t.
\]
Therefore \(L_i(h)=I(h)\) for every admissible coordinate trace \(h\). Applying this identity to the trace \(h_i^f(t)=\frac{\partial f(\gamma(t))}{\partial x_i}\frac{\mathrm{d}\gamma_i(t)}{\mathrm{d}t}\) of an arbitrary admissible score \(f\), we obtain
\[
A_i(f,\gamma)=L_i(h_i^f)=\int_0^1\frac{\partial f(\gamma(t))}{\partial x_i}\frac{\mathrm{d}\gamma_i(t)}{\mathrm{d}t}\dd t=\Phi_i(f,\gamma).
\]
This proves uniqueness.
\end{proof}

The proof shows why we included Assumption~\ref{ass:trace_richness}. Without some richness or separation condition, efficiency can identify only the sum of all coordinate attributions. The theorem needs enough admissible scores to test each coordinate trace separately. This is common in axiomatic representation arguments, and here we state it explicitly rather than hiding it in the proof.

\begin{proof}[Proof of Theorem~\ref{thm:geodesic_characterization}]
Assumption~\ref{ass:transport_regular} gives a unique kinetic-action minimizer \((\rho_t^\star,\vv_t^\star)\) among admissible flows from \(p_0\) to \(p_1\). Therefore any rule in the class described by the theorem must choose the same distributional path and the same characteristic curve \(\gamma_{\vx_1}^\star\) for the endpoint \(\vx_1\). Once this path has been chosen, Theorem~\ref{thm:fixed_path_uniqueness} applies to the fixed path \(\gamma_{\vx_1}^\star\). Hence the coordinate attribution must equal
\[
\int_0^1\frac{\partial f_c(\gamma_{\vx_1}^\star(t))}{\partial x_i}\frac{\mathrm{d}\gamma_{\vx_1,i}^\star(t)}{\mathrm{d}t}\dd t,
\]
which is Eq.~\eqref{eq:geodesic_as}. Conversely, the rule in Eq.~\eqref{eq:geodesic_as} first selects the unique kinetic-action minimizer and then applies the Aumann-Shapley fixed-path rule, so it belongs to the stated class and satisfies the fixed-path axioms. This proves the characterization.
\end{proof}

\begin{proof}[Proof of Theorem~\ref{thm:stability}]
Let \(\gamma(t)\) denote the ideal backward characteristic \(\gamma_{\vx_1}^\star(t)\), and let \(\hat{\gamma}(t)\) denote the learned backward characteristic with the same terminal endpoint \(\hat{\gamma}(1)=\gamma(1)=\vx_1\). Write \(L_v\) for a common Lipschitz constant of \(\vv_t^\star\) and \(\hat{\vv}_t\) in \(\vx\) on the compact set \(\cK\). Write \(B_g=\sup_{\vx\in\cK}\|\nabla f_c(\vx)\|_2\), \(L_g\) for the Lipschitz constant of \(\nabla f_c\) on \(\cK\), and \(B_v=\sup_{t,\vx\in\cK}\max\{\|\vv_t^\star(\vx)\|_2,\|\hat{\vv}_t(\vx)\|_2\}\). These constants are finite by Assumption~\ref{ass:stability_regular} and compactness. We write \(\vv_{t,i}^\star(\vx)\) and \(\hat{\vv}_{t,i}(\vx)\) for the \(i\)-th coordinates of the ideal and learned vector fields.

For \(t\le 1\), the two terminal-value ODEs give
\[
\gamma(t)=\vx_1-\int_t^1\vv_s^\star(\gamma(s))\dd s,
\qquad
\hat{\gamma}(t)=\vx_1-\int_t^1\hat{\vv}_s(\hat{\gamma}(s))\dd s.
\]
Taking the difference and using the triangle inequality gives
\[
\|\gamma(t)-\hat{\gamma}(t)\|_2
\le \int_t^1\|\vv_s^\star(\gamma(s))-\vv_s^\star(\hat{\gamma}(s))\|_2\dd s
+\int_t^1\|\vv_s^\star(\hat{\gamma}(s))-\hat{\vv}_s(\hat{\gamma}(s))\|_2\dd s.
\]
The Lipschitz condition and the uniform vector-field error bound imply
\[
\|\gamma(t)-\hat{\gamma}(t)\|_2
\le \int_t^1 L_v\|\gamma(s)-\hat{\gamma}(s)\|_2\dd s+(1-t)\eps.
\]
The backward form of Gronwall's inequality gives
\[
\sup_{t\in[0,1]}\|\gamma(t)-\hat{\gamma}(t)\|_2\le e^{L_v}\eps.
\]
This is the only place where we use an ODE stability result. It says that a uniformly small error in the vector field produces a uniformly small error in the characteristic curve over a finite time interval.

For a fixed coordinate \(i\), subtract the two attribution integrals:
\[
|\Psi_i-\hat{\Psi}_i|
\le \int_0^1\left|
\frac{\partial f_c(\gamma(t))}{\partial x_i}\vv_{t,i}^\star(\gamma(t))
-
\frac{\partial f_c(\hat{\gamma}(t))}{\partial x_i}\hat{\vv}_{t,i}(\hat{\gamma}(t))
\right|\dd t.
\]
The integrand is bounded by
\[
B_g\|\vv_t^\star(\gamma(t))-\hat{\vv}_t(\hat{\gamma}(t))\|_2
+B_v\|\nabla f_c(\gamma(t))-\nabla f_c(\hat{\gamma}(t))\|_2.
\]
Using Lipschitz continuity and the vector-field error bound, we have
\[
\|\vv_t^\star(\gamma(t))-\hat{\vv}_t(\hat{\gamma}(t))\|_2
\le L_v\|\gamma(t)-\hat{\gamma}(t)\|_2+\eps
\le (L_v e^{L_v}+1)\eps.
\]
We also have
\[
\|\nabla f_c(\gamma(t))-\nabla f_c(\hat{\gamma}(t))\|_2
\le L_g\|\gamma(t)-\hat{\gamma}(t)\|_2
\le L_g e^{L_v}\eps.
\]
Combining the last three displays and integrating over a time interval of length one gives
\[
|\Psi_i-\hat{\Psi}_i|
\le \left[B_g(L_v e^{L_v}+1)+B_vL_g e^{L_v}\right]\eps.
\]
The constant in brackets depends only on \(\cK\), the time horizon, the Lipschitz constants of the vector fields, and the first two derivative bounds of \(f_c\). This proves Eq.~\eqref{eq:stability_bound}.
\end{proof}

\begin{proposition}[Discrete efficiency residual]
\label{prop:discrete_efficiency}
Let \(f\) have Hessian norm bounded by \(M\) on a compact set containing the discrete path \(\hat{\vx}_0,\ldots,\hat{\vx}_K\). Let \(\Delta\hat{\vx}_k=\hat{\vx}_{k+1}-\hat{\vx}_k\), and define the discrete attribution by Eq.~\eqref{eq:discrete_estimator}. Then
\[
\left|\sum_{i=1}^d\hat{\Psi}_i-\left(f(\hat{\vx}_K)-f(\hat{\vx}_0)\right)\right|
\le \frac{M}{2}\sum_{k=0}^{K-1}\|\Delta\hat{\vx}_k\|_2^2.
\]
If the path increments satisfy \(\|\Delta\hat{\vx}_k\|_2\le V/K\), then the residual is at most \(MV^2/(2K)\).
\end{proposition}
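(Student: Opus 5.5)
The plan is to telescope the score difference over the grid and compare each segment with its first-order Taylor term. Summing Eq.~\eqref{eq:discrete_estimator} over coordinates gives
\[
\sum_{i=1}^d\hat{\Psi}_i=\sum_{k=0}^{K-1}\nabla f(\hat{\vx}_k)^\top\Delta\hat{\vx}_k .
\]
The finite difference telescopes as
\[
f(\hat{\vx}_K)-f(\hat{\vx}_0)=\sum_{k=0}^{K-1}\bigl(f(\hat{\vx}_{k+1})-f(\hat{\vx}_k)\bigr).
\]
The residual is therefore the sum of the per-step remainders
\[
r_k=f(\hat{\vx}_{k+1})-f(\hat{\vx}_k)-\nabla f(\hat{\vx}_k)^\top\Delta\hat{\vx}_k ,
\]
and by the triangle inequality it is at most \(\sum_k|r_k|\).

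Each remainder is controlled by Taylor's theorem with integral remainder along the segment \(s\mapsto\hat{\vx}_k+s\Delta\hat{\vx}_k\), \(s\in[0,1]\):
\[
r_k=\int_0^1(1-s)\,\Delta\hat{\vx}_k^\top\nabla^2 f(\hat{\vx}_k+s\Delta\hat{\vx}_k)\,\Delta\hat{\vx}_k\dd s .
\]
Bounding the Hessian operator norm by \(M\) gives
\[
|r_k|\le M\|\Delta\hat{\vx}_k\|_2^2\int_0^1(1-s)\dd s=\tfrac{M}{2}\|\Delta\hat{\vx}_k\|_2^2 .
\]
Summing over \(k\) yields the stated bound.

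The second claim is immediate. If \(\|\Delta\hat{\vx}_k\|_2\le V/K\), then
\[
\sum_k\|\Delta\hat{\vx}_k\|_2^2\le K\cdot V^2/K^2=V^2/K ,
\]
so the residual is at most \(MV^2/(2K)\).

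The only delicate point is that the Hessian bound is applied on the segments between consecutive grid points, not only at the grid points themselves. I would read the hypothesis ``compact set containing the discrete path'' as a set containing the piecewise-linear interpolation of \(\hat{\vx}_0,\ldots,\hat{\vx}_K\); for example, take the convex hull of the points or a small neighborhood of the polyline. This requires \(f\) to be \(C^2\) there. If the intended reading is literally only the finite point set, I would state this strengthening explicitly, since otherwise Taylor's theorem gives no control. No other obstacle is expected.
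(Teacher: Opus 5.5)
Your proposal is correct and follows the paper's own proof: a per-segment Taylor expansion with remainder bounded by $\tfrac{M}{2}\|\Delta\hat{\vx}_k\|_2^2$, telescoping of the score difference, and the triangle inequality, followed by $K(V/K)^2=V^2/K$ for the second claim. Your caveat is a legitimate clarification that the paper's proof uses only implicitly: the Hessian bound must hold on the segments between consecutive grid points, for example on the polyline or its convex hull, and not merely at the grid points themselves.
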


\begin{proof}
Taylor's theorem with remainder gives, for each segment of the discrete path,
\[
f(\hat{\vx}_{k+1})=f(\hat{\vx}_k)+\nabla f(\hat{\vx}_k)^\top\Delta\hat{\vx}_k+r_k,
\qquad |r_k|\le \frac{M}{2}\|\Delta\hat{\vx}_k\|_2^2.
\]
Summing this identity from \(k=0\) to \(K-1\) makes the left side telescope:
\[
f(\hat{\vx}_K)-f(\hat{\vx}_0)=\sum_{k=0}^{K-1}\nabla f(\hat{\vx}_k)^\top\Delta\hat{\vx}_k+\sum_{k=0}^{K-1}r_k.
\]
The first sum is exactly the sum over coordinates of Eq.~\eqref{eq:discrete_estimator}. Taking absolute values and applying the remainder bound gives the first claim. If \(\|\Delta\hat{\vx}_k\|_2\le V/K\), then \(\sum_k\|\Delta\hat{\vx}_k\|_2^2\le K(V/K)^2=V^2/K\), which gives the second claim.
\end{proof}

\begin{proposition}[Additive scores]
\label{prop:additive_scores}
Suppose \(f(\vx)=b+\sum_{i=1}^d f_i(x_i)\) and the path \(\gamma\) connects \(\vx_0\) to \(\vx_1\). Then the Aumann-Shapley attribution along any continuously differentiable path is
\[
\Phi_i(f,\gamma)=f_i(x_{1,i})-f_i(x_{0,i}).
\]
\end{proposition}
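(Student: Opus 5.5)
The plan is to compute the integrand of Eq.~\eqref{eq:as_path} directly from the additive structure and then apply the one-dimensional fundamental theorem of calculus to each coordinate.

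First, because \(f(\vx)=b+\sum_j f_j(x_j)\), the partial derivative \(\partial f/\partial x_i\) depends only on \(x_i\). Explicitly,
\[
\frac{\partial f(\vx)}{\partial x_i}=f_i'(x_i).
\]
For this we need each \(f_i\) to be \(C^1\) on an interval containing the range of \(\gamma_i\). This follows from \(f\) being \(C^1\) near the image of \(\gamma\): fix the other coordinates and differentiate. Evaluating along the path, the coordinate trace becomes
\[
h_i^f(t)=f_i'(\gamma_i(t))\,\frac{\mathrm{d}\gamma_i(t)}{\mathrm{d}t}.
\]

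Second, this trace is exactly the derivative of the scalar function \(t\mapsto f_i(\gamma_i(t))\), by the one-variable chain rule. Since \(\gamma_i\) is \(C^1\) and \(f_i\) is \(C^1\), the composition is \(C^1\) on \([0,1]\). The fundamental theorem of calculus then gives
\[
\Phi_i(f,\gamma)=\int_0^1 \frac{\mathrm{d}}{\mathrm{d}t}f_i(\gamma_i(t))\dd t=f_i(\gamma_i(1))-f_i(\gamma_i(0))=f_i(x_{1,i})-f_i(x_{0,i}).
\]
The constant \(b\) contributes nothing because it has zero gradient.

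There is no real obstacle here. The only point requiring care is the regularity bookkeeping: justifying that each \(f_i\) is differentiable on the relevant interval, so that the chain rule applies. I would handle this as above, noting that \(f_i(x_i)\) equals \(f\) minus terms independent of \(x_i\), up to a constant. As a consistency check, summing over \(i\) recovers efficiency, Eq.~\eqref{eq:efficiency}. Finally, I would remark that the result does not depend on \(\gamma\) beyond its endpoints, which is why Section~\ref{sec:geodesic_flows} describes it as matching the classical Shapley allocation of the induced additive game.
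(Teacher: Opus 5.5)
Your proposal is correct and follows the paper's proof essentially line for line. Both write \(\partial_i f(\vx)=f_i'(x_i)\), recognize the coordinate trace as \(\frac{\mathrm{d}}{\mathrm{d}t}f_i(\gamma_i(t))\) by the one-variable chain rule, and integrate; your regularity step, deducing that each \(f_i\) is \(C^1\) near the range of \(\gamma_i\) from \(f\) being \(C^1\) near the image of \(\gamma\), is a sound detail that the paper leaves implicit.
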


\begin{proof}
For an additive score, \(\partial_i f(\vx)=f_i'(x_i)\). Therefore
\[
\Phi_i(f,\gamma)=\int_0^1 f_i'(\gamma_i(t))\frac{\mathrm{d}\gamma_i(t)}{\mathrm{d}t}\dd t.
\]
The one-dimensional chain rule gives \(\frac{\mathrm{d}}{\mathrm{d}t} f_i(\gamma_i(t))=f_i'(\gamma_i(t))\frac{\mathrm{d}\gamma_i(t)}{\mathrm{d}t}\). Hence
\[
\Phi_i(f,\gamma)=\int_0^1\frac{\mathrm{d}}{\mathrm{d}t} f_i(\gamma_i(t))\dd t=f_i(\gamma_i(1))-f_i(\gamma_i(0)).
\]
Since \(\gamma_i(1)=x_{1,i}\) and \(\gamma_i(0)=x_{0,i}\), the claim follows.
\end{proof}

\section{Optimal transport details and the meaning of geodesic}
\label{app:transport_details}

This section explains the transport objects used in the main text. The purpose is not to develop optimal transport from scratch. We only need the dynamic viewpoint that connects a probability path, a velocity field, and kinetic action.

\subsection{From static couplings to dynamic paths}
\label{app:static_to_dynamic}

A coupling between \(p_0\) and \(p_1\) is a joint law \(\pi\) on pairs \((\vx_0,\vx_1)\) whose first marginal is \(p_0\) and whose second marginal is \(p_1\). Let \(\Pi(p_0,p_1)\) denote the set of all such couplings, and let \(W_2(p_0,p_1)\) denote the quadratic Wasserstein distance. The quadratic-cost optimal transport problem searches over this set:
\[
W_2^2(p_0,p_1)=\inf_{\pi\in\Pi(p_0,p_1)}\int_{\R^d\times\R^d}\|\vx_1-\vx_0\|_2^2\dd\pi(\vx_0,\vx_1).
\]
When an optimal coupling \(\pi^\star\) is available, it induces the linear interpolation
\[
\vx_t=(1-t)\vx_0+t\vx_1,\qquad (\vx_0,\vx_1)\sim\pi^\star.
\]
The law of \(\vx_t\) is a distribution \(\rho_t\). This path of laws is the constant-speed Wasserstein geodesic under standard regularity conditions. In the deterministic Monge case, where \(\vx_1=T(\vx_0)\) for an optimal transport map \(T\), each sample moves along a straight segment from \(\vx_0\) to \(T(\vx_0)\). In general, the geodesic statement is distributional, not a statement that every point along every segment is a natural image.

\subsection{Kinetic action and the Benamou-Brenier view}
\label{app:benamou_brenier_details}

The dynamic formulation used in Eq.~\eqref{eq:benamou_brenier} replaces couplings by probability-law paths and velocity fields. A pair \((\rho,\vv)\), with \(\rho=(\rho_t)_{t\in[0,1]}\) and \(\vv=(\vv_t)_{t\in[0,1]}\), is admissible when it satisfies the continuity equation in Eq.~\eqref{eq:continuity}. This equation means that probability mass moves according to the velocity field and is neither created nor destroyed. The action
\[
\cA(\rho,\vv)=\int_0^1\int_{\R^d}\|\vv_t(\vx)\|_2^2\dd\rho_t(\vx)\dd t
\]
measures the average squared speed of the mass over time. The Benamou-Brenier formula says that the minimum action over all admissible pairs equals \(W_2^2(p_0,p_1)\) \citep{benamou2000computational,villani2008optimal}. This is why we call the selected path a geodesic path: it is the least-action path between distributions in the Wasserstein geometry.

\begin{proposition}[Action of an optimal displacement interpolation]
\label{prop:action_displacement}
Assume that the optimal coupling is induced by a map \(T\), so \(\vx_1=T(\vx_0)\) and \(\vx_0\sim p_0\). Define \(\vx_t=(1-t)\vx_0+tT(\vx_0)\). Then the associated constant velocity along each particle is \(T(\vx_0)-\vx_0\), and the kinetic action equals the quadratic transport cost:
\[
\int_0^1\E\|T(\vx_0)-\vx_0\|_2^2\dd t
=\E\|T(\vx_0)-\vx_0\|_2^2.
\]
If \(T\) is optimal, this value equals \(W_2^2(p_0,p_1)\).
\end{proposition}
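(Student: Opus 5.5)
The plan is a direct computation of the kinetic action of the particle flow \(\vx_t=(1-t)\vx_0+tT(\vx_0)\), followed by the appeal to optimality of \(T\). First, I would differentiate the particle trajectory in \(t\) for fixed \(\vx_0\). This gives \(\frac{\mathrm{d}\vx_t}{\mathrm{d}t}=T(\vx_0)-\vx_0\), which does not depend on \(t\). So each particle moves at constant velocity along a straight segment, which is the first claim.

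The second step is to read this Lagrangian velocity as the Eulerian action in Eq.~\eqref{eq:kinetic_action}. I would write the action in its particle form,
\[
\int_0^1\int\|\vv_t(\vx)\|_2^2\dd\rho_t(\vx)\dd t=\int_0^1\E\Bigl\|\tfrac{\mathrm{d}\vx_t}{\mathrm{d}t}\Bigr\|_2^2\dd t .
\]
This identity needs an Eulerian field \(\vv_t\) with \(\vv_t(\vx_t)=T(\vx_0)-\vx_0\). I would define it as the pushforward field \(\vv_t=(T-\mathrm{id})\circ T_t^{-1}\), where \(T_t=(1-t)\mathrm{id}+tT\).

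This is the only step that is not purely formal, because it requires \(T_t\) to be injective for \(t<1\). I would justify it with the classical fact for optimal quadratic-cost maps (Brenier: \(T=\nabla\varphi\) with \(\varphi\) convex). Then \(T_t=\nabla\bigl((1-t)\|\cdot\|^2/2+t\varphi\bigr)\) is the gradient of a strictly convex function for \(t<1\), hence injective, and the continuity equation~\eqref{eq:continuity} holds by the standard pushforward computation. If \(T\) is not assumed optimal, one can instead take \(\vv_t(\vx)=\E[T(\vx_0)-\vx_0\mid \vx_t=\vx]\). Jensen's inequality then only gives that the Eulerian action is at most the particle energy. For the displayed equality I therefore interpret the action as the particle (Lagrangian) energy, which is what the statement writes.

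Given the particle form, the computation is immediate: the integrand \(\E\|T(\vx_0)-\vx_0\|_2^2\) is constant in \(t\), so integrating over \([0,1]\) returns it unchanged. Finally, if \(T\) is optimal, then \(\E\|T(\vx_0)-\vx_0\|_2^2\) is the optimal value of the static problem in Appendix~\ref{app:static_to_dynamic}, namely \(W_2^2(p_0,p_1)\). This is consistent with Eq.~\eqref{eq:benamou_brenier}: the displacement interpolation is admissible, so its action is at least \(W_2^2\), and we have just shown it is exactly \(W_2^2\). I expect the injectivity/Eulerian-identification step to be the main obstacle, and I would handle it via Brenier's theorem as described.
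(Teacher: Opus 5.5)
Your proposal is correct and follows the same route as the paper's proof. You differentiate the particle path to get the constant velocity \(T(\vx_0)-\vx_0\), integrate that constant over \([0,1]\), take expectation over \(\vx_0\sim p_0\), and use optimality of \(T\) to identify the value with \(W_2^2(p_0,p_1)\). The one addition is that you justify equating this particle energy with the Eulerian action of Eq.~\eqref{eq:kinetic_action}, via injectivity of \(T_t\) from Brenier's theorem, and you correctly note that without optimality Jensen only gives an inequality; the paper skips this step and simply treats the action as the particle energy written in the display.
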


\begin{proof}
For each fixed starting point \(\vx_0\), differentiating \(\vx_t=(1-t)\vx_0+tT(\vx_0)\) with respect to \(t\) gives \(\mathrm{d}\vx_t/\mathrm{d}t=T(\vx_0)-\vx_0\). This velocity is constant in time along that particle. Therefore the particle action over \([0,1]\) is
\[
\int_0^1\left\|\frac{\mathrm{d}\vx_t}{\mathrm{d}t}\right\|_2^2\dd t
=\int_0^1\|T(\vx_0)-\vx_0\|_2^2\dd t
=\|T(\vx_0)-\vx_0\|_2^2.
\]
Taking expectation over \(\vx_0\sim p_0\) gives the displayed identity. If \(T\) is the optimal quadratic-cost transport map, the expected squared displacement is the definition of \(W_2^2(p_0,p_1)\) in the Monge formulation.
\end{proof}

This proposition clarifies why straightness matters. A W2 geodesic has constant-speed displacement interpolation at the ideal level. Rectified Flow and Reflow are useful because they try to learn trajectories that are closer to such straight displacement paths than generic generative trajectories. This connection is approximate in high-dimensional image spaces. We therefore evaluate path action and stability empirically rather than claiming that the neural flow exactly recovers the optimal map.

\subsection{Why geodesic does not mean strictly on-manifold}
\label{app:not_on_manifold}

The phrase ``on-manifold'' can mean different things. A strict version would assume a low-dimensional set \(\mathcal{M}\subset\R^d\), a data distribution supported on \(\mathcal{M}\), and a certificate that every path point \(\gamma(t)\) belongs to \(\mathcal{M}\). We do not make this assumption and we do not prove such a certificate.

Our statement is distributional. We choose a flow whose time marginals \(\rho_t\) move from \(p_0\) to \(p_1\). At intermediate times, \(\rho_t\) is generally neither \(p_0\) nor \(p_1\). If \(p_0\) is a Gaussian prior and \(p_1\) is an image distribution, intermediate states are generated states along a transport bridge. They may look more structured than arbitrary pixel interpolation, but this visual fact is not a mathematical proof of data-manifold membership.

For this reason, the main text uses the terms transport-consistent and generative. Flow Consistency Error checks whether a numerical path follows the learned vector field. It does not check whether the true data density is high at every intermediate point. This distinction makes the claim weaker but much more accurate: we replace heuristic paths with a variationally selected transport path, not with a certified manifold path.

\subsection{Endpoint conditioning}
\label{app:endpoint_conditioning}

The ideal definition in Eq.~\eqref{eq:geodesic_ode} conditions on a target endpoint \(\vx_1\). In an exact deterministic transport map, this endpoint has a unique preimage \(\vx_0\) under the flow map, except on sets where the map is not invertible. In a learned ODE model, we approximate this preimage by integrating the learned vector field backward from \(\vx_1\) to time zero. We then integrate forward from the obtained \(\hat{\vx}_0\) to store a stable numerical trajectory. This backward-forward procedure keeps the endpoint tied to the input being explained and avoids sampling an unrelated reference point.

When the learned flow is imperfect, the backward-forward trajectory may not return exactly to \(\vx_1\). In implementation we either use the stored backward trajectory in reverse order or correct the final point by setting \(\hat{\vx}_K=\vx_1\) before the last gradient accumulation. Both choices preserve the intended interpretation: the attribution explains the score difference between the recovered reference endpoint and the observed input. The completeness residual in Table~\ref{tab:completeness} reports the remaining numerical integration error.

\section{Implementation details}
\label{app:implementation_details}

This section describes how we instantiate the ideal transport-geodesic attribution with a learned flow. The main point is that all baselines are evaluated through the same path-integral form whenever they define a path. This keeps the comparison focused on the path rather than on a different allocation formula.

\subsection{Learned vector field and Reflow}
\label{app:rf_reflow_details}

We train a time-dependent vector field \(\vv_{\vtheta}(\vx,t)\) by the Rectified Flow objective in Eq.~\eqref{eq:rectified_flow_loss}. As in the main text, \(\hat{\vv}_t(\vx)=\vv_{\vtheta}(\vx,t)\) denotes the learned field when we view it as a time-indexed vector field. The objective uses pairs \((\vz_0,\vz_1)\) drawn from a coupling \(\pi\). For the first Rectified Flow, \(\pi\) is usually the independent coupling between the reference prior and the data distribution. For Reflow, we first run the learned flow from prior samples to generated endpoints, then use the induced pairs to train a new vector field. This changes the coupling used by the regression problem.

We use Reflow because the first independent coupling can contain crossings and unnecessary displacement. Reflow tends to reduce the transport cost of the induced coupling and to straighten trajectories \citep{liu2022flow}. In our paper, this is an approximation strategy rather than an exact theorem that the learned model reaches the optimal transport map. The ideal object remains the kinetic-action minimizer in Eq.~\eqref{eq:benamou_brenier}; the neural vector field is the computable approximation.

\subsection{Reference endpoint for a given input}
\label{app:reference_endpoint}

For an observed input \(\vx_1\), we need a reference endpoint tied to this input. We obtain it by solving the learned ODE backward from time one to time zero:
\[
\frac{\mathrm{d}\vx_t}{\mathrm{d}t}=\vv_{\vtheta}(\vx_t,t),
\qquad \vx_{t=1}=\vx_1.
\]
The result is \(\hat{\vx}_0\). We then integrate forward from \(\hat{\vx}_0\) and store the states \(\hat{\vx}_0,\ldots,\hat{\vx}_K\) on the uniform grid \(t_k=k/K\), with \(\Delta t=1/K\). Using the same learned field in both directions keeps the reference and the input connected by one trajectory. This differs from Expected Gradients or GradientSHAP, where the reference is sampled independently from a background distribution.

The backward integration can be numerically imperfect when the learned vector field is stiff or inaccurate. To avoid making the attribution depend on a small terminal mismatch, we compute the reported score difference using the actual endpoints used by the stored trajectory. When we compare completeness against \(f_c(\vx_1)-f_c(\hat{\vx}_0)\), we set the last stored point to the observed input. This choice makes the explanation target explicit.

\subsection{Numerical integration}
\label{app:numerical_integration_details}

The discrete estimator in Eq.~\eqref{eq:discrete_estimator} is a left Riemann estimator for the path integral on the grid \(t_k=k/K\). We chose it because it is simple and matches the finite-difference view of adding small path increments. A midpoint rule can reduce the quadrature error when we can afford extra gradient evaluations. Proposition~\ref{prop:discrete_efficiency} shows that the completeness residual is controlled by the squared path increments when the predictor has bounded Hessian on the visited region.

In practice, one explanation with \(K\) steps requires storing \(K+1\) states and computing \(K\) or \(K+1\) gradients of the target score, depending on the quadrature rule. We use the target logit before the softmax rather than the probability after the softmax. Logits avoid saturation effects and are standard in attribution experiments. We aggregate pixel-level RGB attributions by summing or taking the channelwise absolute value depending on the visualization protocol. For deletion metrics, we use the signed attribution score to rank features when the target is a score increase and use absolute scores only when the baseline method is defined as unsigned.

\subsection{Baselines}
\label{app:baseline_details}

We compare against local gradient smoothing methods, backpropagation decomposition methods, Shapley-style randomized baselines, straight-line path methods, and generative path methods. SmoothGrad averages gradients under small input noise. Guided Backpropagation modifies the backward pass through ReLU layers. GradientSHAP samples references and interpolation coefficients. Integrated Gradients uses the straight path from a fixed reference to the input. DDIM supplies a deterministic generative trajectory from a diffusion model. Our method uses the learned Rectified Flow or Reflow trajectory and then applies the same Aumann-Shapley coordinate integral.

For methods that do not define a continuous path, GPS and FCE are not meaningful, so the corresponding entries are marked with a dash in Table~\ref{tab:main_benchmark}. For methods that define a path but not through the Rectified Flow vector field, FCE measures mismatch with our learned flow. This is useful as a path-dynamics diagnostic but should not be interpreted as a universal measure of explanation quality.

\subsection{Computational cost}
\label{app:cost_details}

Exact discrete Shapley values require evaluating an exponential number of coalitions if each input coordinate is treated as a player. Practical Shapley explainers reduce this cost by sampling coalitions or grouping features, but they still need a missing-feature model. Our method has cost linear in the number of path steps. The main cost is the repeated gradient evaluation of the fixed predictor along the path. The generative ODE cost is separate and depends on the solver and the vector field architecture.

This cost profile makes the method closer to Integrated Gradients than to exact Shapley. The key difference is not the line integral itself. The key difference is the path used by the line integral. Integrated Gradients chooses a straight pixel path. We choose a learned transport path that approximates the kinetic-action principle.

\section{Evaluation metrics and protocols}
\label{app:metrics_details}

Attribution metrics measure different properties, and no single metric proves explanation quality. We therefore separate four questions. First, we check whether a sampled path is geometrically short in the ambient space. Second, we check whether the sampled path follows the learned vector field. Third, we check whether the attribution map is visually structured. Fourth, we check whether important pixels affect the target score under deletion. Throughout this section, the discrete path is \(\hat{\vx}_0,\ldots,\hat{\vx}_K\), the grid is uniform with \(t_k=k/K\), and \(\Delta t=1/K\). We write \(\hat{\vv}_{t_k}(\hat{\vx}_k)=\vv_{\vtheta}(\hat{\vx}_k,t_k)\) for the learned velocity evaluated at the sampled state.

\subsection{Geometric path straightness}
\label{app:gps_details}

For a discrete path \(\hat{\vx}_0,\ldots,\hat{\vx}_K\), the Geometric Path Straightness score is
\[
\mathrm{GPS}=\frac{\sum_{k=0}^{K-1}\|\hat{\vx}_{k+1}-\hat{\vx}_k\|_2}{\|\hat{\vx}_K-\hat{\vx}_0\|_2}.
\]
The denominator is the endpoint displacement and the numerator is the discrete path length. The score is at least one by the triangle inequality whenever \(\hat{\vx}_K\ne\hat{\vx}_0\). A value close to one means that the sampled path is close to a straight segment in ambient Euclidean space. This does not by itself imply optimal transport. It only checks one geometric consequence of low-curvature displacement interpolation.

Integrated Gradients has GPS exactly one when the path is the straight line. This is why GPS must be interpreted together with FCE and faithfulness metrics. A straight pixel path can have excellent GPS while still being a poor generative transition.

\subsection{Flow Consistency Error}
\label{app:fce_details}

The Flow Consistency Error is
\[
\mathrm{FCE}=\frac{1}{K}\sum_{k=0}^{K-1}\left\|\frac{\hat{\vx}_{k+1}-\hat{\vx}_k}{\Delta t}-\hat{\vv}_{t_k}(\hat{\vx}_k)\right\|_2^2.
\]
This metric compares the finite-difference velocity of the sampled path with the velocity predicted by the learned flow. Low FCE means that the path is dynamically consistent with the learned vector-field family \(\hat{\vv}_t\). It does not mean that the path lies on a true data manifold. If a method uses the same vector field to generate the path, FCE can be very small. We therefore treat it as a diagnostic for path dynamics, not as the main faithfulness metric.

For methods that do not define a learned vector field, such as SmoothGrad or Guided Backpropagation, FCE is not applicable. For straight-line Integrated Gradients, we can still evaluate the straight path against the learned vector-field family \(\hat{\vv}_t\) to show that the straight path is not a characteristic curve of the learned transport field. This is why the main table reports large FCE values for IG.

\subsection{Structure-aware total variation}
\label{app:satv_details}

Let \(\Omega\) be the two-dimensional pixel grid of an image. Let \(I\) denote the input image, and let \(\vphi\) denote the attribution map after summing or averaging over color channels. We write \(\nabla_{\mathrm{img}}\) for finite differences on the image grid. For a pixel \((u,v)\in\Omega\), \(\nabla_{\mathrm{img}}\vphi_{u,v}\) is the local finite-difference gradient of the attribution map and \(\nabla_{\mathrm{img}} I_{u,v}\) is the corresponding image gradient. The Structure-Aware Total Variation diagnostic is
\[
\mathrm{SATV}(\vphi)=
\sum_{(u,v)\in\Omega}
\|\nabla_{\mathrm{img}}\vphi_{u,v}\|_1
\exp\left(-\alpha\|\nabla_{\mathrm{img}} I_{u,v}\|_2\right),
\]
where \(\alpha>0\) controls how strongly the penalty is reduced near image edges. We set \(\alpha=10\) in all experiments. The exponential weight reduces the penalty near image edges and keeps the penalty high in flat regions. As a result, SATV penalizes high-frequency attribution noise where the image itself has little structure.

SATV is only a structure diagnostic. A very smooth but unfaithful attribution can score well under SATV. We therefore read SATV together with deletion metrics and completeness checks rather than using it as a standalone measure of explanation quality.

\subsection{Edge alignment score}
\label{app:eas_details}

The Edge Alignment Score measures whether changes in the attribution map occur near image edges. We define image edge strength and attribution edge strength by
\[
e_{u,v}=\|\nabla_{\mathrm{img}} I_{u,v}\|_2,
\qquad
s_{u,v}=\|\nabla_{\mathrm{img}}\vphi_{u,v}\|_1.
\]
The reported score is the normalized weighted average
\[
\mathrm{EAS}(\vphi,I)=\frac{\sum_{(u,v)\in\Omega}s_{u,v}e_{u,v}}{\sum_{(u,v)\in\Omega}s_{u,v}+10^{-12}}.
\]
A higher score means that attribution-map edges are more aligned with image edges. This is again a structural diagnostic rather than a causal test. Smooth maps can have high edge alignment if they concentrate changes near object boundaries, while noisy maps can have low edge alignment because their gradients are spread across flat regions.

\subsection{Deletion metrics}
\label{app:deletion_details}

Deletion evaluates whether removing highly attributed pixels reduces the target score. We sort pixels by attribution magnitude or signed positive attribution, depending on the method's output convention. We then replace the top-ranked pixels progressively with either a zero value or a blurred value. At each deletion fraction, we evaluate the target logit. The deletion score is the area under the resulting score curve. Lower is better because a faithful positive attribution should identify pixels whose removal quickly decreases the target score.

Deletion depends on the perturbation operator. Zero deletion can create unnatural black patches, while blur deletion can preserve low-frequency image statistics but still change the data distribution. We report both variants because agreement between them is more informative than either one alone. We do not claim that deletion is a complete evaluation of explanation quality.

\subsection{Completeness residual}
\label{app:completeness_details}

For a path attribution rule, completeness means that the attribution sum equals the score difference. In continuous time, the Aumann-Shapley integral satisfies completeness exactly by the chain rule. In discrete time, numerical quadrature creates a residual. For \(N\) evaluated examples, we report
\[
R_{\mathrm{eff}}=\frac{1}{N}\sum_{j=1}^N\left|\sum_{i=1}^d\hat{\Psi}_i^{(j)}-\bigl(f_c(\vx_1^{(j)})-f_c(\hat{\vx}_0^{(j)})\bigr)\right|.
\]
Here \(\hat{\Psi}_i^{(j)}\) is the discrete attribution for coordinate \(i\) on example \(j\), \(\vx_1^{(j)}\) is the observed endpoint, and \(\hat{\vx}_0^{(j)}\) is the learned reference endpoint obtained by backward integration. Table~\ref{tab:completeness} reports this residual for different integration step counts. Proposition~\ref{prop:discrete_efficiency} explains why the residual decreases when the path discretization becomes finer.

\subsection{Stability metrics}
\label{app:stability_metrics_details}

We measure stability across random seeds by comparing attribution maps generated by independently trained flows. Pixel variance is the average pointwise variance of normalized attribution maps. SSIM measures perceptual similarity between pairs of normalized maps. Rank correlation measures whether the feature ordering induced by attribution scores is stable. These metrics answer a different question from deletion. A method can be faithful but unstable if small changes to the path alter the feature ranking. Our path-stability experiments test the hypothesis that lower-action and lower-curvature paths reduce this variability.

\section{Additional results and visualizations}
\label{app:additional_results}

This section reports supplementary experiments. We include them to support the main claims without changing the main message. The paper does not claim that a learned path is certified to stay on a data manifold. The message is that a better transport path gives a more stable and more structured attribution integral.

\subsection{Synthetic additive sanity check}
\label{app:additive_sanity_check}

We first check the additive case in Proposition~\ref{prop:additive_scores}. We sample \(d=10\) dimensional inputs with independent coordinates from \([-\pi,\pi]^d\) and use
\[
f(\vx)=\sum_{i=1}^d\sin(x_i),
\qquad \vx_0=\bm{0}.
\]
The exact coordinate contribution is \(\sin(x_i)\). We compute the path integral along the straight path from \(\vx_0\) to \(\vx\). This experiment does not use a learned flow. It isolates the numerical accuracy of the Aumann-Shapley estimator.

\begin{figure}[H]
\centering
\includegraphics[width=0.92\linewidth]{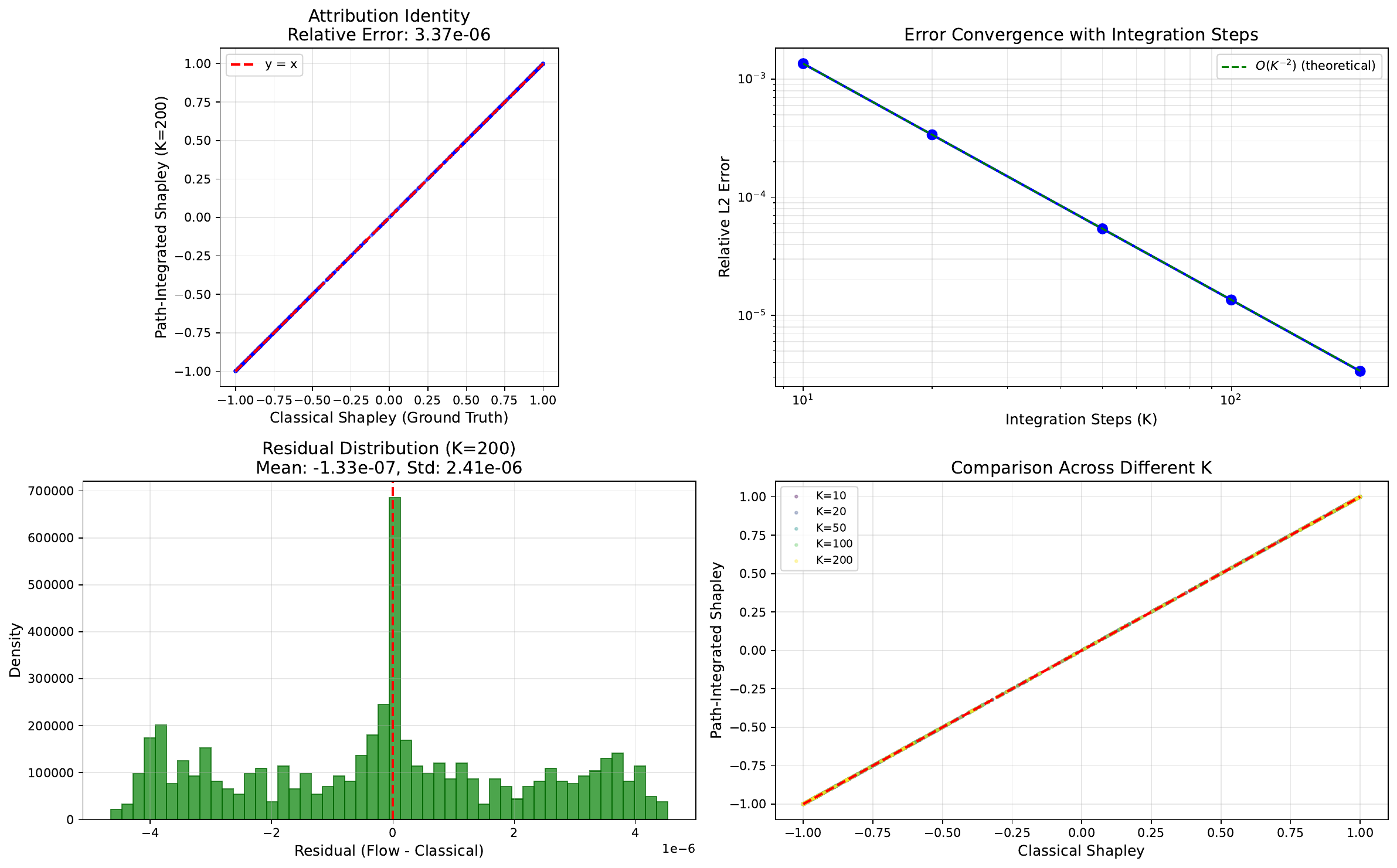}
\caption{Synthetic additive sanity check. The path-integral estimator recovers the analytical additive Shapley values, and the error decreases as the quadrature grid becomes finer.}
\label{fig:additive_sanity_appendix}
\end{figure}

Figure~\ref{fig:additive_sanity_appendix} shows near-identity alignment between analytical values and numerical attributions. This supports the claim that our method reduces to the classical additive allocation when interactions vanish.

\subsection{Controlled Gaussian transport experiment}
\label{app:gaussian_transport_experiment}

We next use Gaussian-to-Gaussian transport because the quadratic optimal transport map is available in closed form. This gives an oracle against which we can compare one-step Rectified Flow and Reflow. We use this experiment only as a controlled diagnostic. It does not prove that high-dimensional image flows exactly recover optimal transport.

Let \(\cA^\star=W_2^2(p_0,p_1)\) denote the oracle kinetic action. For a learned method \(m\), let \(\widehat{\cA}_m\) be its empirical discrete action,
\[
\widehat{\cA}_m=\sum_{k=0}^{K-1}\frac{\|\hat{\vx}_{k+1}^{m}-\hat{\vx}_{k}^{m}\|_2^2}{\Delta t}.
\]
The reported action gap is the relative excess action
\[
\Delta A_m=\frac{\widehat{\cA}_m-\cA^\star}{\cA^\star+10^{-12}}.
\]
For the oracle, \(\Delta A=0\). Let \(\vv_t^\star\) be the oracle velocity field and let \(\hat{\vv}_t^m\) be the learned velocity field for method \(m\). The relative field error is
\[
\mathrm{RFE}_m=
\frac{
\left(\sum_{k=0}^{K-1}\|\hat{\vv}_{t_k}^{m}(\hat{\vx}_{k}^{m})-\vv_{t_k}^{\star}(\hat{\vx}_{k}^{m})\|_2^2\right)^{1/2}
}{
\left(\sum_{k=0}^{K-1}\|\vv_{t_k}^{\star}(\hat{\vx}_{k}^{m})\|_2^2\right)^{1/2}+10^{-12}
}.
\]
The curvature proxy is the squared finite-difference acceleration integrated over the path,
\[
\mathrm{Curv}_m=
\sum_{k=1}^{K-1}
\left\|\frac{\hat{\vx}_{k+1}^{m}-2\hat{\vx}_{k}^{m}+\hat{\vx}_{k-1}^{m}}{\Delta t^2}\right\|_2^2\Delta t.
\]
These definitions make the table interpretable. \(\Delta A\) measures excess kinetic cost, RFE measures vector-field mismatch against the oracle, and Curv measures unnecessary bending of the sampled path.

\begin{figure}[H]
\centering
\includegraphics[width=0.95\linewidth]{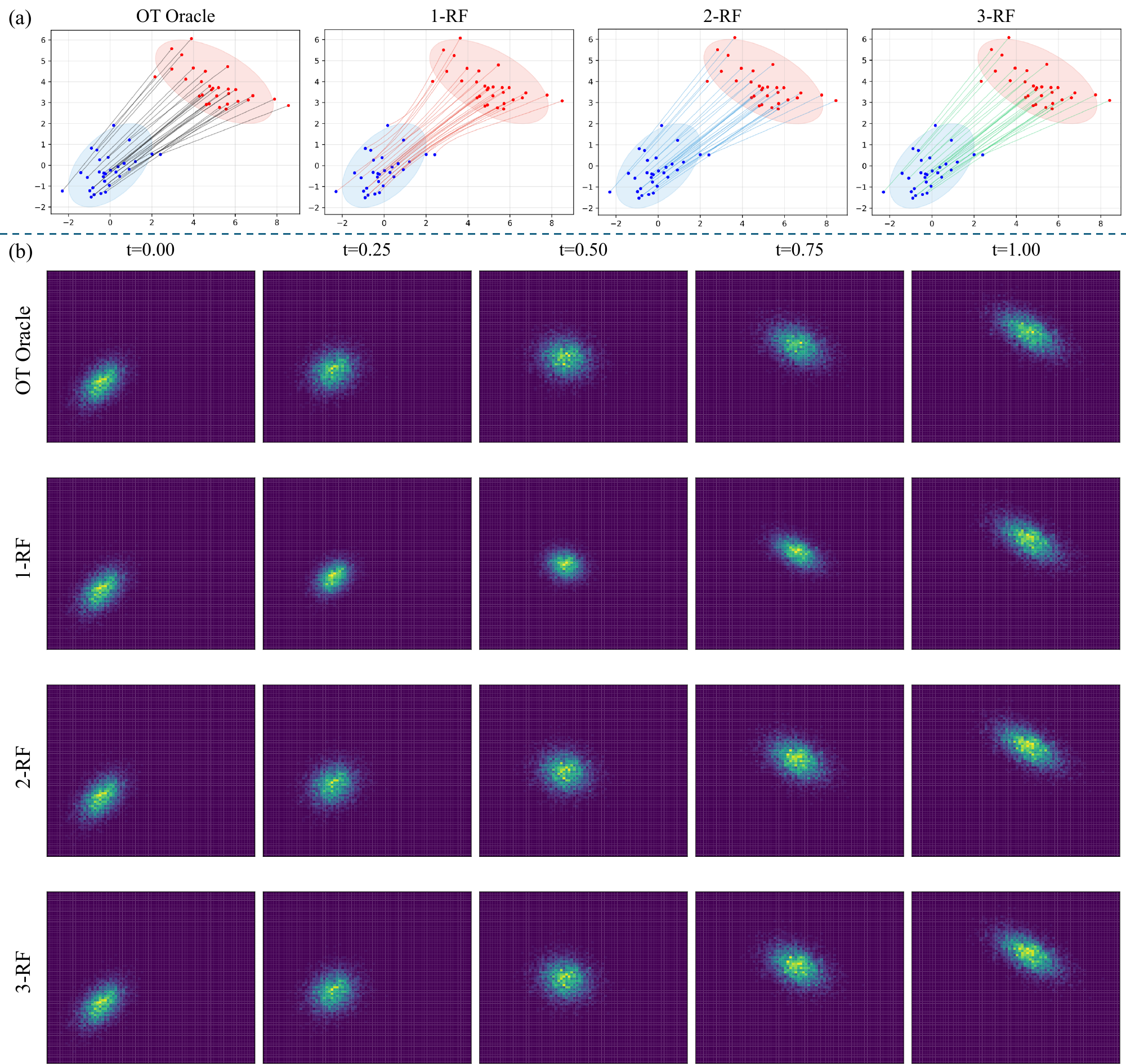}
\caption{Controlled two-dimensional Gaussian transport. The oracle displacement interpolation gives the reference geometry. One-step Rectified Flow can be visibly more curved, while Reflow straightens the trajectories and makes the intermediate densities closer to the oracle interpolation.}
\label{fig:toy_trajectories_density_appendix}
\end{figure}

The qualitative result in Figure~\ref{fig:toy_trajectories_density_appendix} explains why Reflow is useful for attribution. Since the attribution integral samples gradients along the path, unnecessary path curvature can expose the integral to unstable regions of the predictor. Reflow reduces this curvature in the controlled setting.

\begin{table}[H]
\centering
\small
\setlength{\tabcolsep}{5.0pt}
\resizebox{\linewidth}{!}{%
\begin{tabular}{lcccc}
\toprule
Metric & OT oracle & 1-RF & 2-RF & 3-RF \\
\midrule
\(W_2^2(p_0,p_1)\) & 27.8976 & - & - & - \\
Action gap \(\Delta A\) & 0 & \(0.2205\pm0.0033\) & \(0.00189\pm0.00144\) & \(0.00199\pm0.00141\) \\
Relative field error & 0 & \(0.2933\pm0.0050\) & \(0.00284\pm0.00014\) & \(0.00276\pm0.00008\) \\
Curvature proxy & \(3.45{\times}10^{-5}\pm2.63{\times}10^{-7}\) & \(0.0763\pm0.00077\) & \(4.85{\times}10^{-4}\pm1.50{\times}10^{-5}\) & \(4.35{\times}10^{-4}\pm1.31{\times}10^{-5}\) \\
\bottomrule
\end{tabular}}
\caption{Controlled Gaussian transport in \(d=10\), reported as mean \(\pm\) standard deviation over five seeds. Reflow reduces the relative action gap, relative field error, and curvature proxy compared with one-step Rectified Flow.}
\label{tab:toy_geom_d10_appendix}
\end{table}

Table~\ref{tab:toy_geom_d10_appendix} supports the same conclusion quantitatively. The one-step flow has a nontrivial action gap and field mismatch relative to the oracle. After one Reflow iteration, these gaps decrease by roughly two orders of magnitude in this controlled setting. This is the empirical basis for using Reflow as a closer approximation to the geodesic ideal.

\subsection{Attribution error versus field mismatch}
\label{app:field_mismatch_results}

Theorem~\ref{thm:stability} predicts that attribution error should decrease when the learned field approaches the ideal field on the relevant region. We test this prediction in the Gaussian setting where the oracle is known, and also across flow checkpoints where a strong checkpoint serves as a practical oracle.

Let \(\bm{\Psi}^{\mathrm{ref}}\) be the attribution produced by the oracle or reference checkpoint, and let \(\hat{\bm{\Psi}}^m\) be the attribution produced by method or checkpoint \(m\). The relative attribution error is
\[
\mathrm{RAE}_m=
\frac{\|\hat{\bm{\Psi}}^m-\bm{\Psi}^{\mathrm{ref}}\|_2}{\|\bm{\Psi}^{\mathrm{ref}}\|_2+10^{-12}}.
\]
When an oracle vector field is available, the empirical flow mismatch is the relative field error \(\mathrm{RFE}_m\) defined above. When only checkpoints are available, we replace \(\vv_t^\star\) by the velocity field of the reference checkpoint in the same formula. This distinction matters because the checkpoint experiment measures convergence to a practical reference, while the Gaussian experiment measures error against the known OT field.

\begin{figure}[H]
\centering
\includegraphics[width=0.95\linewidth]{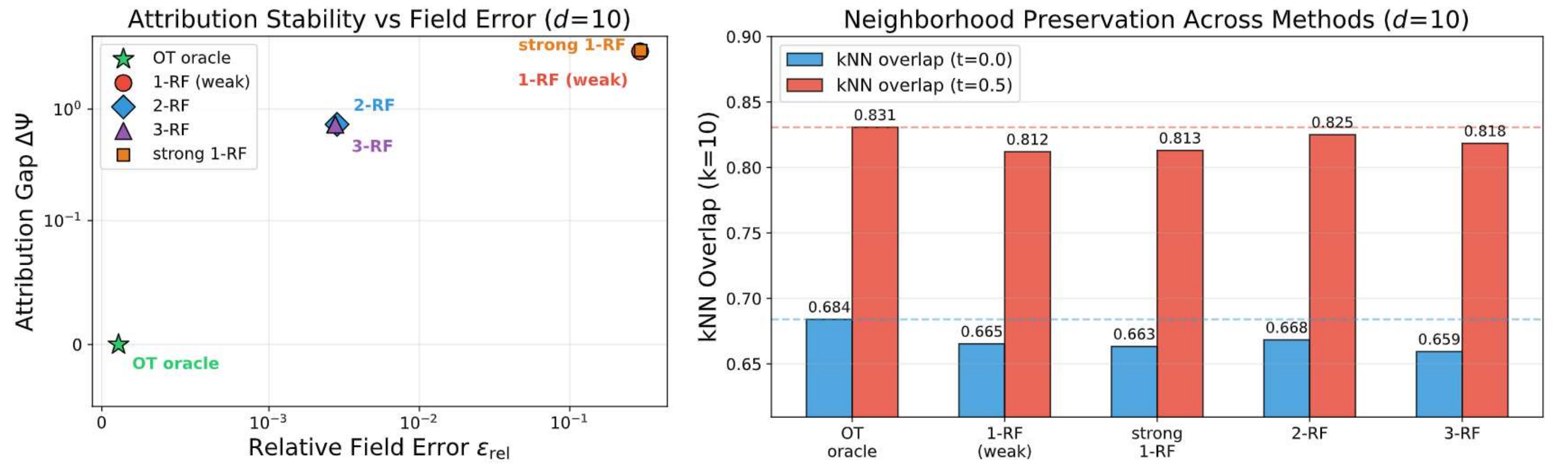}
\caption{High-dimensional controlled diagnostic. The left panel relates relative attribution error to relative field mismatch. The right panel reports neighborhood preservation along intermediate transport states. The result supports the stability view: better field alignment gives smaller attribution discrepancy.}
\label{fig:d10_e3_appendix}
\end{figure}

\begin{figure}[H]
\centering
\includegraphics[width=0.56\linewidth]{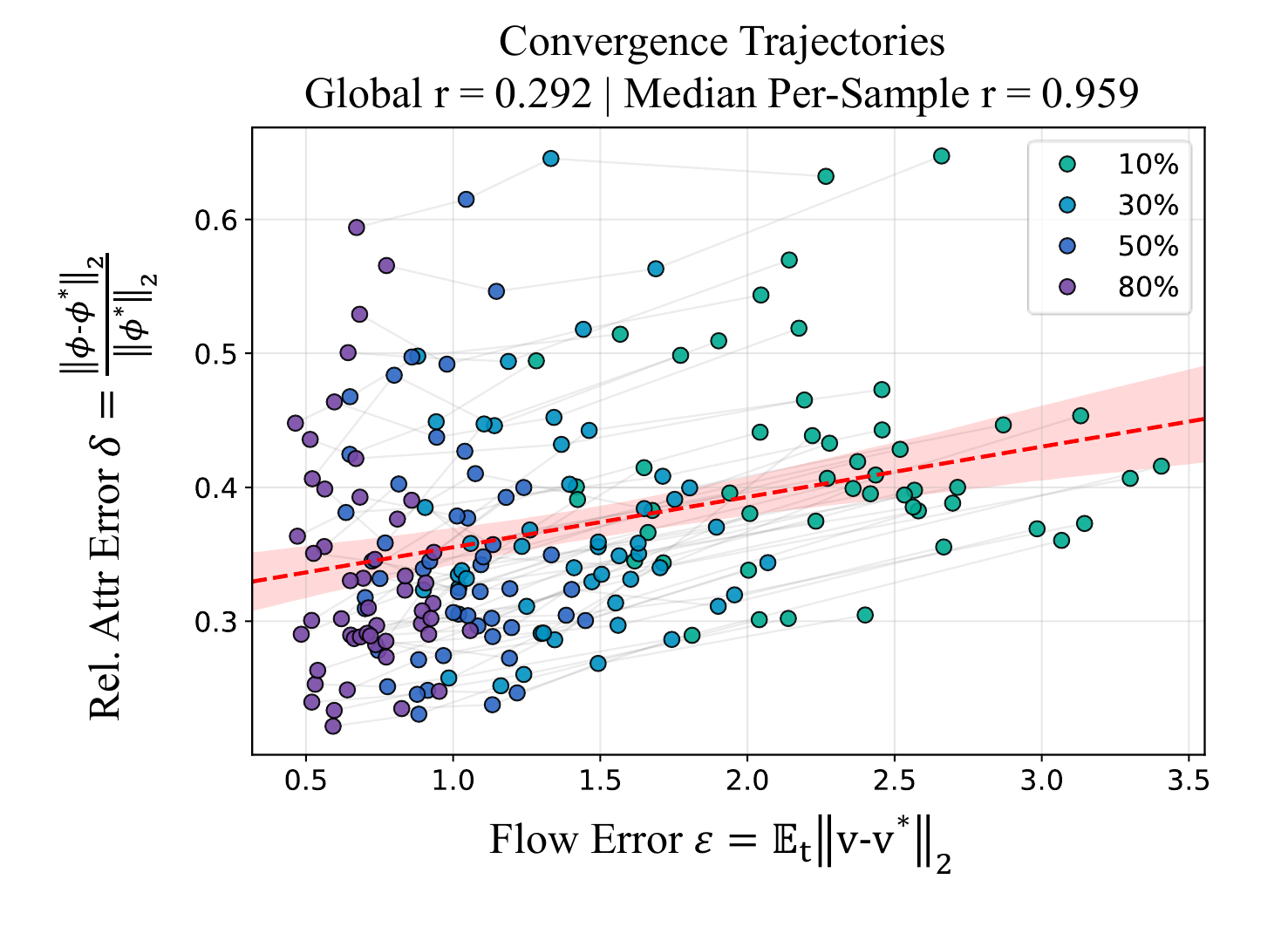}
\caption{Attribution convergence across flow checkpoints. The trend is consistent with the stability theorem: as the learned vector field approaches the reference checkpoint, the attribution gap decreases.}
\label{fig:convergence_appendix}
\end{figure}

Figures~\ref{fig:d10_e3_appendix} and~\ref{fig:convergence_appendix} are not meant to show a universal linear law. They show that, in our controlled diagnostics, attribution error tracks vector-field mismatch in the direction predicted by Theorem~\ref{thm:stability}.

\subsection{Action diagnostics}
\label{app:action_diagnostics}

We also visualize representative action estimates along sampled paths. Lower action does not automatically imply better explanation under every metric, but high unnecessary action often means that the path moves more than needed before reaching the same endpoint. Such movement can make the line integral more sensitive to local gradient variation.

\begin{figure}[H]
\centering
\includegraphics[width=0.54\linewidth]{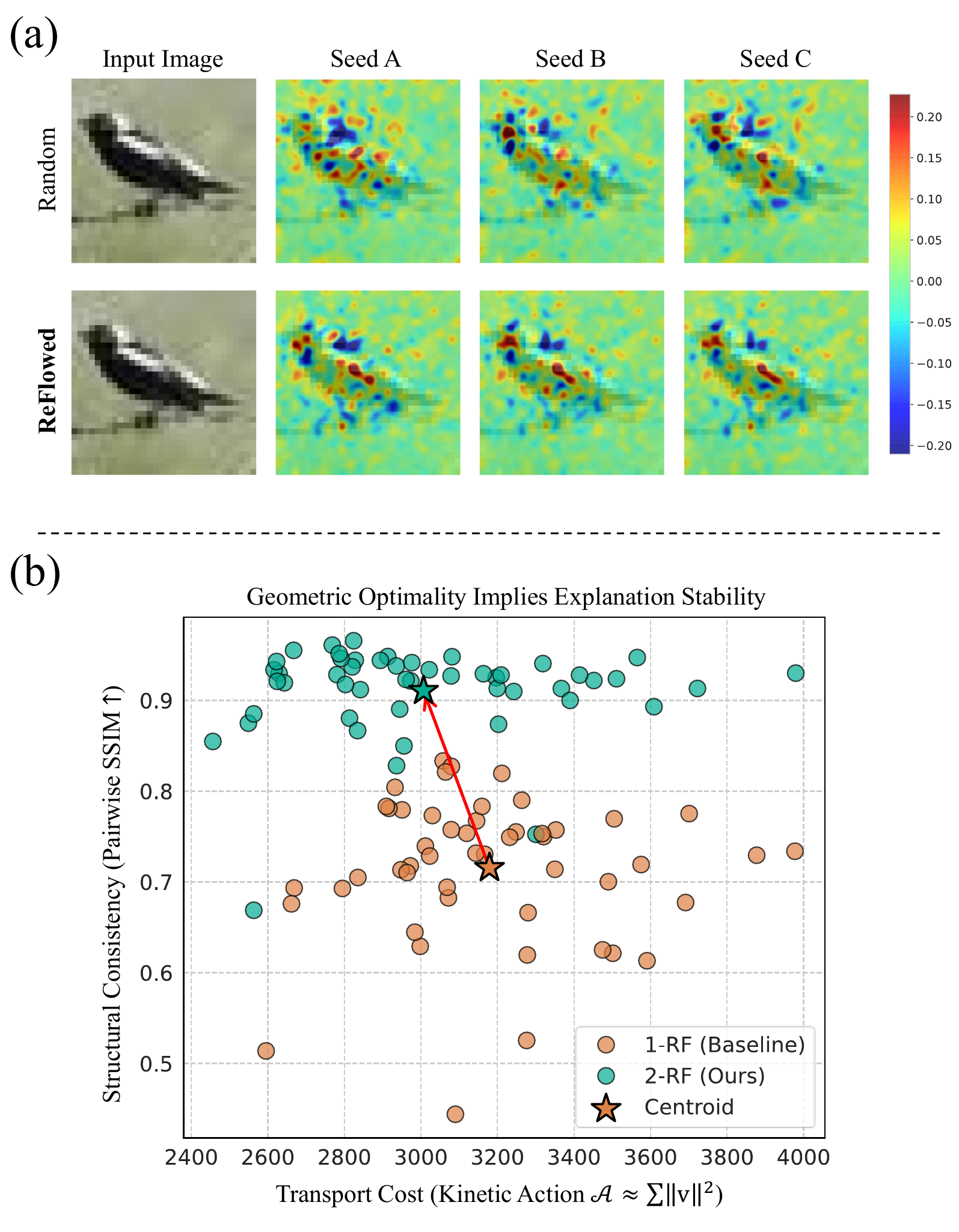}
\caption{Representative action diagnostic for sampled paths. The plot illustrates how path movement and action are measured along a numerical trajectory.}
\label{fig:action_diagnostic_appendix}
\end{figure}

\clearpage
\subsection{Additional qualitative examples}
\label{app:additional_qualitative}

The following figures show additional randomly selected attribution examples. We include them to make the qualitative claim easier to inspect. The figures should not be read as proof of faithfulness. They are visual evidence that the learned transport path tends to give more spatially coherent maps than straight-line interpolation on the shown samples.

\begin{figure}[h]
\centering
\includegraphics[width=\linewidth]{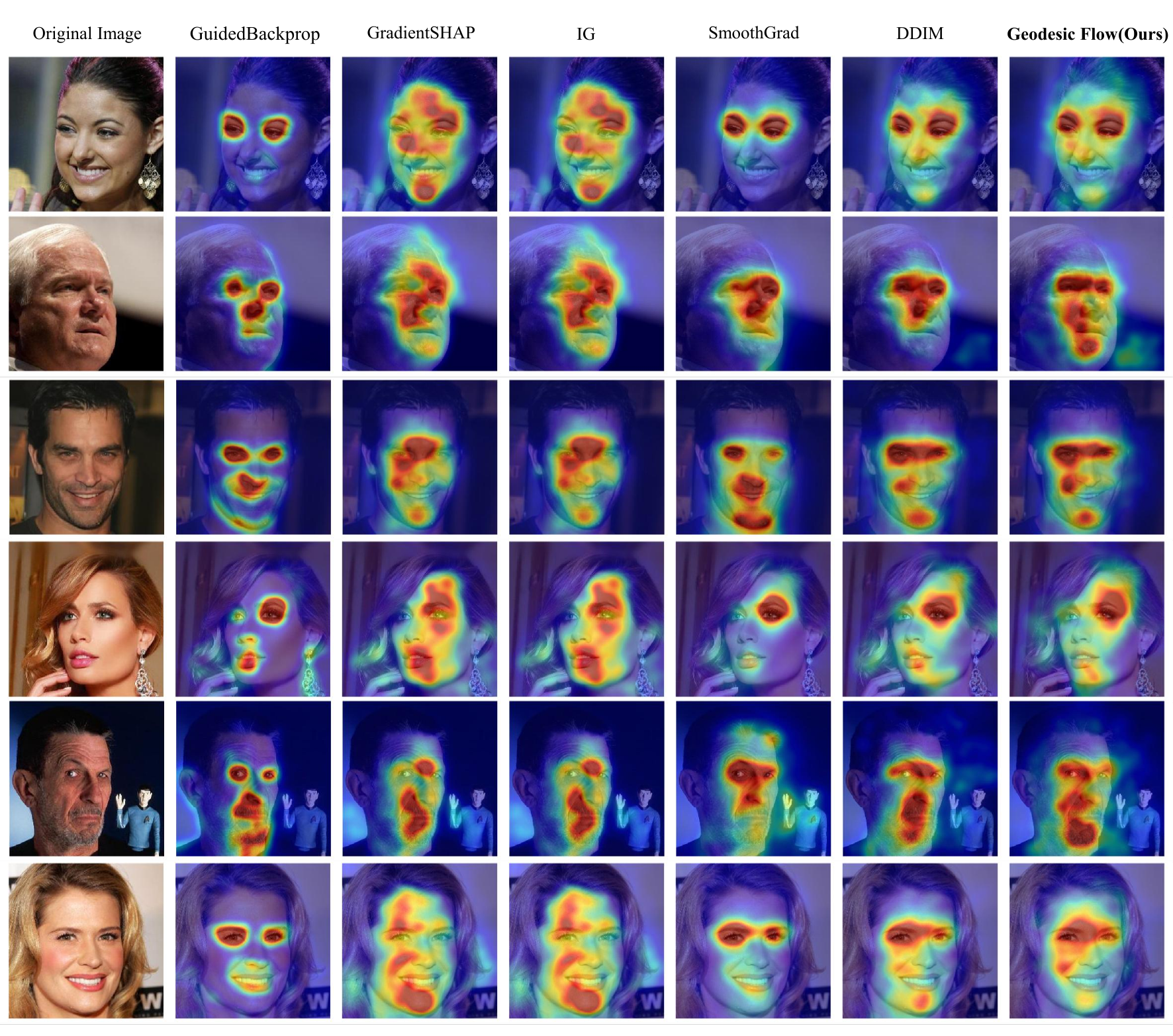}
\caption{Additional CelebA-HQ examples, part I.}
\label{fig:celeba_appendix_1}
\end{figure}

\begin{figure}[p]
\centering
\includegraphics[width=\linewidth]{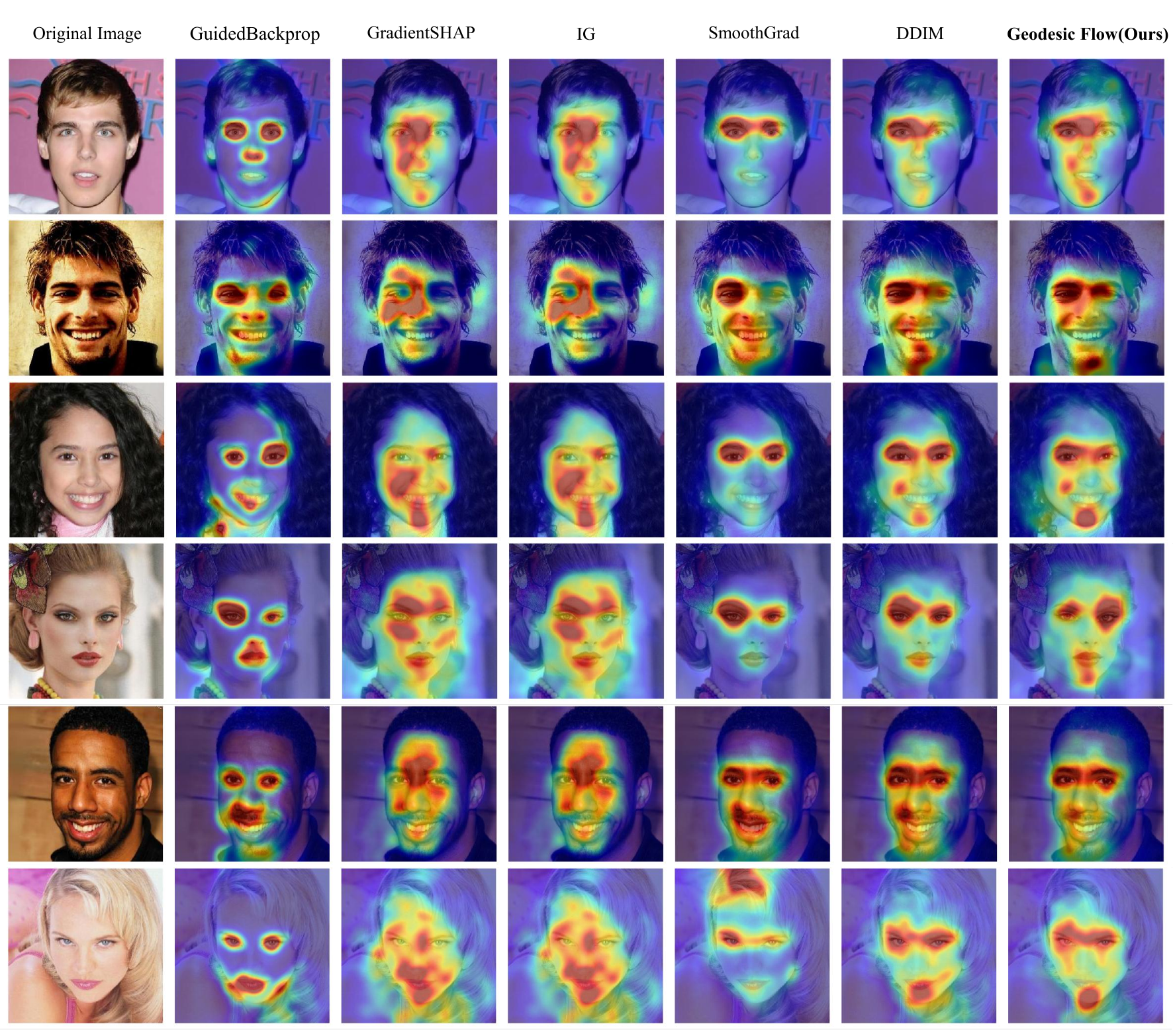}
\caption{Additional CelebA-HQ examples, part II.}
\label{fig:celeba_appendix_2}
\end{figure}

\begin{figure}[p]
\centering
\includegraphics[width=\linewidth]{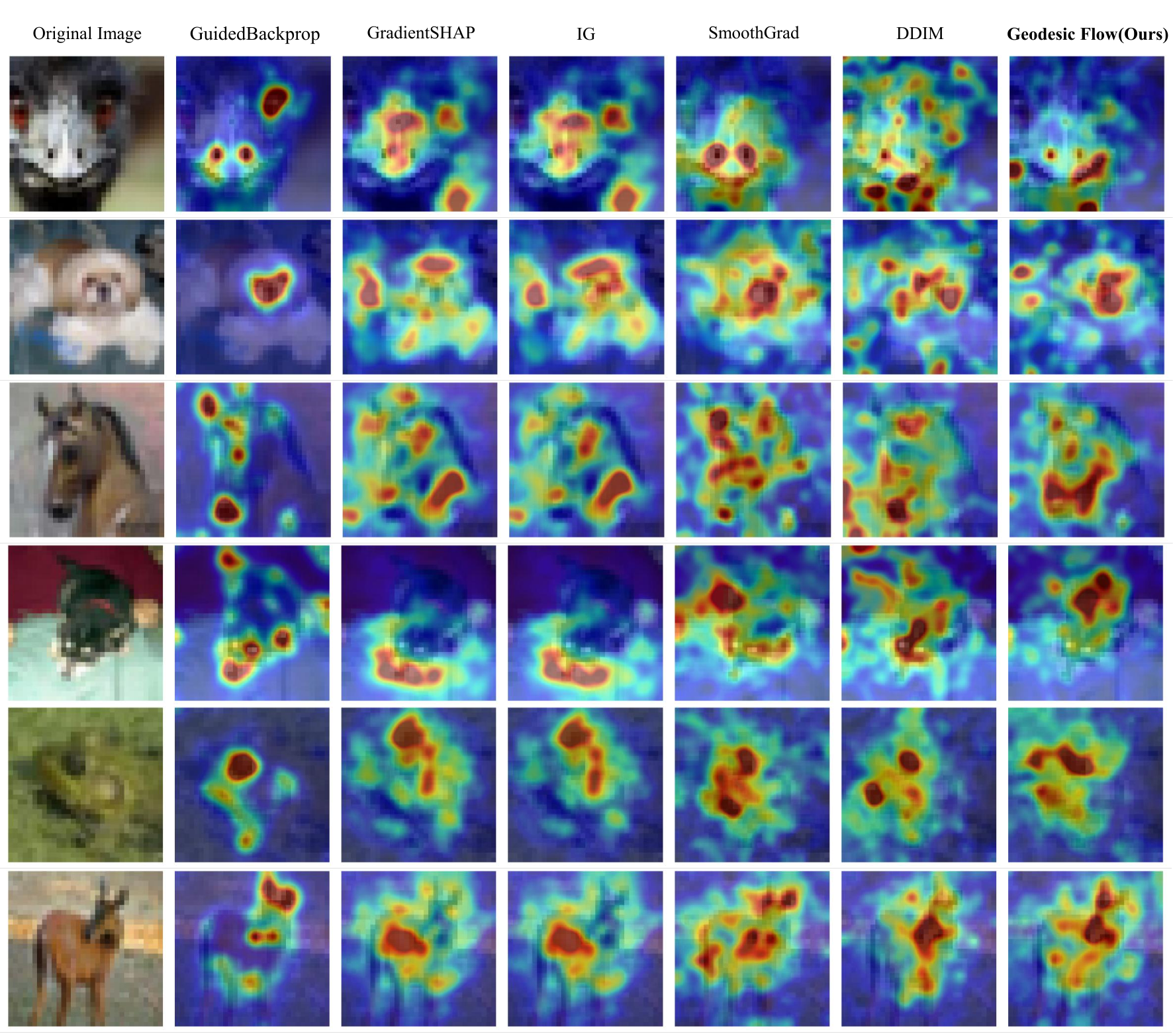}
\caption{Additional CIFAR-10 examples, part I.}
\label{fig:cifar_appendix_1}
\end{figure}

\begin{figure}[p]
\centering
\includegraphics[width=\linewidth]{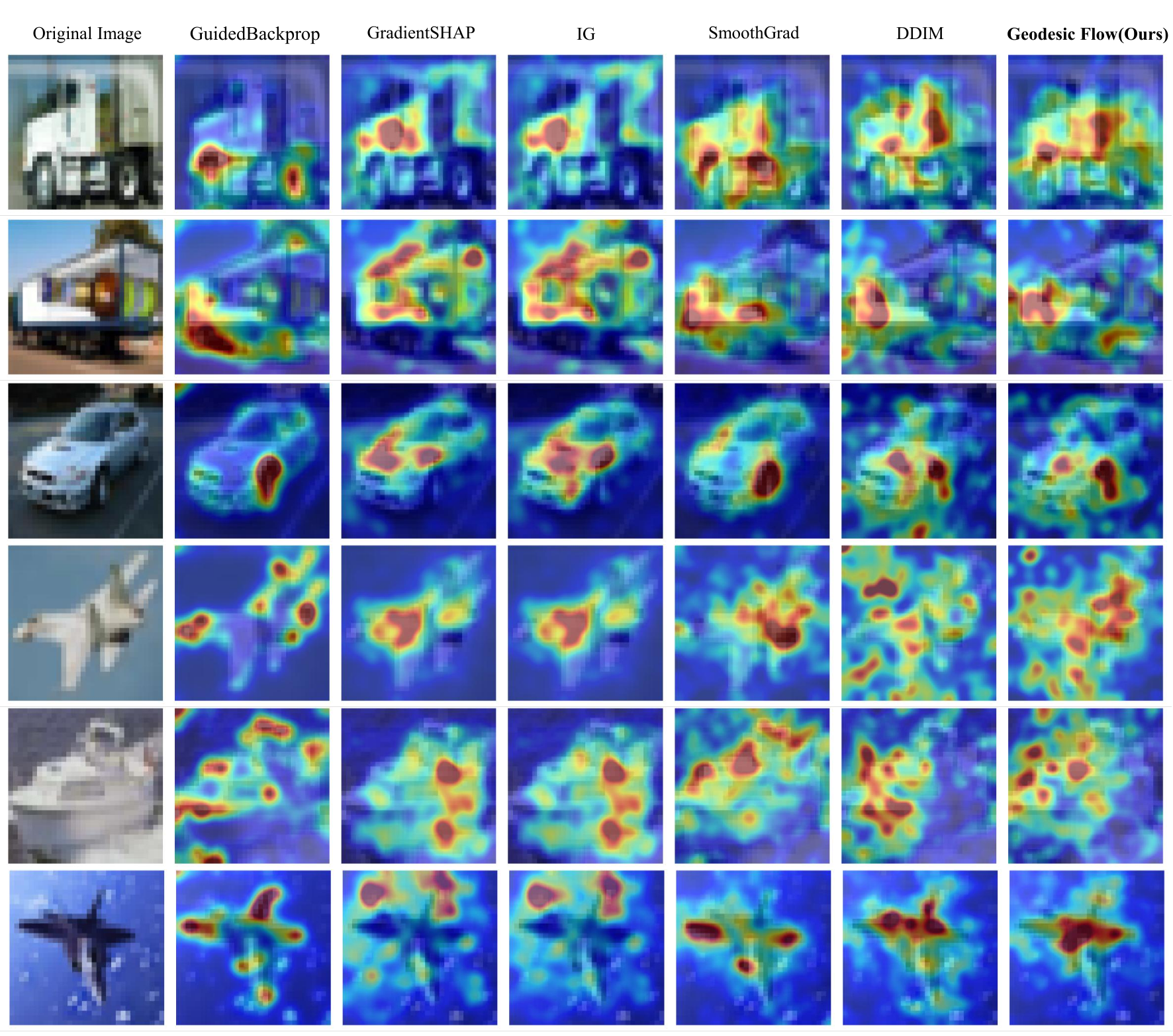}
\caption{Additional CIFAR-10 examples, part II.}
\label{fig:cifar_appendix_2}
\end{figure}

\clearpage
\section{Limitations and scope of the claims}
\label{app:limitations}

This appendix section states what the paper does not claim. We include it because the earlier wording around on-manifold attribution can be misleading. The revised paper deliberately avoids that overclaim.

\subsection{No strict manifold certificate}
\label{app:limitation_no_manifold}

We do not prove that the learned path stays on a true data manifold. A strict manifold statement would require a defined manifold, a membership criterion, and a proof that the learned ODE trajectory satisfies that criterion for every time. Our theory instead uses a path of probability distributions that transports \(p_0\) to \(p_1\). The intermediate laws are transport marginals. This is enough for the attribution principle, because the principle needs a selected counterfactual transition, not a manifold certificate.

This limitation changes how one should read FCE. FCE close to zero means that a numerical path follows the learned vector field. It does not mean that the true data density is high at every point. We therefore use FCE as a dynamics diagnostic and not as evidence of strict on-manifold behavior.

\subsection{Dependence on the learned generator}
\label{app:limitation_generator}

The method inherits errors from the learned flow. If \(\vv_{\vtheta}\) fails to approximate the target transport field near the trajectory for \(\vx_1\), then the attribution may follow a poor path. Theorem~\ref{thm:stability} explains this dependence by bounding attribution error in terms of vector-field error under regularity assumptions. The theorem does not remove the need for a good generator. It only tells us how generator error propagates once the assumptions hold.

This also means that datasets with poor generative coverage or strong spurious correlations can produce misleading transport paths. In such cases, the attribution may reflect the generator's learned biases together with the classifier's behavior. We recommend checking generator quality and using the method together with sanity checks rather than treating it as a standalone certificate.

\subsection{Approximate optimal transport}
\label{app:limitation_approx_ot}

The ideal path is the kinetic-action minimizer. In high-dimensional image experiments, we do not solve the exact Benamou-Brenier problem. Rectified Flow and Reflow give a scalable approximation. The controlled Gaussian experiments show that Reflow can move the learned path much closer to an OT oracle, but this evidence is empirical and problem dependent. The correct claim is that Reflow reduces the geometry gap in our diagnostics, not that it always reaches exact optimal transport.

The uniqueness language in the main theorem should also be read relative to the ideal problem. If the OT dynamic plan is not unique, then the path-selection problem can have multiple minimizers. Assumption~\ref{ass:transport_regular} rules out that ambiguity for the formal theorem. In applications, different trained flows can approximate different near-minimizing paths, which is why we report stability across seeds.

\subsection{Computational cost}
\label{app:limitation_cost}

The method is more expensive than one backward pass. It needs ODE integration and repeated gradients along the path. This is still linear in the number of integration steps and much cheaper than exact feature-level Shapley values, but it is slower than SmoothGrad with a small number of noise samples or a single Grad-CAM pass. The cost is most relevant for high-resolution images and large predictors.

Several engineering choices can reduce the cost. We can use fewer path steps when approximate completeness is acceptable, use midpoint or adaptive quadrature to improve accuracy per step, cache path states, or group pixels into superpixels before deletion evaluation. These choices change the numerical estimator but not the continuous attribution principle.

\subsection{Evaluation limitations}
\label{app:limitation_evaluation}

Deletion, blur deletion, SATV, EAS, GPS, and FCE each measure only one aspect of explanation behavior. Deletion depends on the replacement operator. SATV can reward smooth maps even when they are not faithful. EAS measures visual alignment with image edges but not causal relevance. GPS measures path straightness but not distributional correctness. FCE measures consistency with a learned vector field but not true density support. We therefore interpret the experiments as a collection of evidence rather than as a proof that one method is universally best.

The most defensible empirical conclusion is the one stated in the main text. More transport-consistent and lower-action paths tend to produce more stable and more structured attribution maps, while deletion faithfulness remains competitive. This conclusion is narrower than a universal superiority claim, but it is aligned with what the theory and experiments actually support.

\end{document}